\documentclass[10pt]{article} 

\usepackage[preprint]{jmlr2e}

\usepackage{amsmath,amsfonts,bm}

\def\eqref#1{equation~\ref{#1}}
\def\Eqref#1{Equation~\ref{#1}}

\def\1{\bm{1}}

\def\vx{{\bm{x}}}

\def\mA{{\bm{A}}}

\def\mD{{\bm{D}}}

\def\mI{{\bm{I}}}

\def\mM{{\bm{M}}}

\def\mS{{\bm{S}}}

\def\mU{{\bm{U}}}
\def\mV{{\bm{V}}}
\def\mW{{\bm{W}}}
\def\mX{{\bm{X}}}

\DeclareMathAlphabet{\mathsfit}{\encodingdefault}{\sfdefault}{m}{sl}
\SetMathAlphabet{\mathsfit}{bold}{\encodingdefault}{\sfdefault}{bx}{n}
\newcommand{\tens}[1]{\bm{\mathsfit{#1}}}

\def\tS{{\tens{S}}}

\def\tW{{\tens{W}}}
\def\tX{{\tens{X}}}
\def\tY{{\tens{Y}}}

\usepackage{url}
\usepackage{subcaption}
\usepackage{amsmath}
\usepackage{xcolor}

\title{tensorFM: Low-Rank Approximations of Cross-Order Feature Interactions}

\author{
  {\name Alessio Mazzetto} \\
  \addr Department of Computer Science\\
  \addr Brown University\\
  \addr Providence, RI, United States
  \AND
  {\name Mohammad Mahdi Khalili} \\
  \addr CSE Department\\
  \addr Ohio State University\\
  \addr Columbus, OH, United States\\
  \addr Yahoo Research, United States
  \AND
  {\name Laura Fee Nern} \\
  \addr Yahoo Research, Germany
  \AND
  {\name Michael Viderman} \\
  \addr Yahoo Research, Ireland
  \AND
  {\name Alex Shtoff} \\
  \addr Technology Innovation Institute, Israel
  \AND
  {\name Krzysztof Dembczy\'nski} \\
  \addr Yahoo Research, United States\\
  \addr Institute of Computing Sciences\\
  \addr Poznan University of Technology, Poznan, Poland
}

\usepackage[utf8]{inputenc} 
\usepackage[T1]{fontenc}    
\usepackage{hyperref}       
\usepackage{url}            
\usepackage{booktabs}       
\usepackage{amsfonts}       
\usepackage{nicefrac}       
\usepackage{microtype}      
\usepackage{xcolor}         
\usepackage{paralist}
\usepackage{graphicx}

\usepackage{mathtools}
\usepackage{wrapfig}

\usepackage[capitalize,noabbrev]{cleveref}

\begin{document}

\maketitle

\begin{abstract}
We address prediction problems on tabular categorical data, where each instance is defined by multiple categorical attributes, each taking values from a finite set.
These attributes are often referred to as fields, and their categorical values as features.
Such problems frequently arise in practical applications, including
click-through rate prediction and social sciences.
We introduce and analyze {tensorFM}, a new model
that efficiently captures high-order interactions between attributes
via a low-rank tensor approximation representing the strength of these interactions. Our model generalizes field-weighted factorization machines.
Empirically, tensorFM demonstrates competitive performance with state-of-the-art methods. Additionally,  its low latency makes it well-suited for time-sensitive applications, such as online advertising.
\end{abstract}

\section{Introduction}
\label{sec:introduction}

Learning and knowledge discovery from tabular and categorical data is an important problem with various application areas. 
We assume that each instance is characterized by multiple categorical attributes, each with a finite domain. 
These attributes are often referred to as fields, and their categorical values as features. 
Even when the original attribute domains are numeric, they can be discretized into nominal values. 
This approach not only enhances model interpretability but also enables the modeling of more complex relationships.

Consider the following two notable and diverse examples of applications where tabular categorical data are highly common.
In social sciences, demographic data and categorical properties often describe instances. 
For example, when studying recidivism, typical attributes include age, race, gender, charge degree, number of prior criminal records, and re-arrest indicator. While some of these attributes, such as age or prior criminal records, are numeric, discretization can be beneficial, as discussed earlier. 
In such studies, it is crucial to develop interpretable models whose decision-making processes can be easily understood, and can reveal interdependencies between attributes.
In online advertising, when predicting the click-through rate, 
useful attributes include user ID, geolocation, time of day, advertiser ID, and top-level domain. 
Similar to the previous example, these attributes are either categorical or can benefit from discretization. 
However, a key challenge in this domain is the limited availability of memory and computational power during real-time serving. 
Since the model must process a vast number of ad requests per second, this constraint necessitates a trade-off between accuracy and model size.
\Cref{table:ctr_example} gives an example of data for both discussed problems. 

\begin{table}[h!]
\centering
\caption{Examples of multi-field categorical data: recidivism (left) and online advertising (right).}
\label{table:ctr_example}
\begin{minipage}[t]{0.48\linewidth}
\centering
\resizebox{\linewidth}{!}{%
\begin{tabular}{@{}ccccc@{}}
\toprule
Recidivism & Age & Gender & Charge & Re-arrest \\
\midrule
-1 & 24 & M & Felony & 0 \\
-1 & 34 & F & Felony & 0 \\
1 & 19 & M & Misd. & 1 \\
1 & 25 & F & Felony & 0 \\
\bottomrule
\end{tabular}
}
\end{minipage}%
\hfill
\begin{minipage}[t]{0.48\linewidth}
\centering
\resizebox{\linewidth}{!}{%
\begin{tabular}{@{}cclcc@{}}
\toprule
Click & User ID & Country & Hour & Adv. ID \\
\midrule
1 & 1404801 & Italy & 14 & 182837 \\
-1 & 4013723 & France & 19 & 172726 \\
-1 & 5200686 & France & 5 & 182823 \\
1 & 2190445 & Germany & 12 & 937362 \\
\bottomrule
\end{tabular}
}
\end{minipage}
\end{table}

Problems of this type can be effectively addressed using Factorization Machines (FMs), 
as they can model pairwise interactions between attributes, generate feature embeddings, and maintain computational efficiency. 
In fact, FMs have gained significant attention in online advertising and recommender systems due to their high performance and efficiency. 
Since their introduction, various FM variants have been proposed~\citep[e.g.][]{FFM, FwFM, wang2017deep}, 
all aiming to explicitly model second-order feature interactions.


The key element of FMs is the low-rank approximation of the weight matrix that captures second-order feature interactions. Specifically, each feature is associated with a low-dimensional embedding vector, and the interaction between two features is computed as the dot product of their respective embedding vectors. A limitation of this approach is that a feature's embedding is used to model the interactions with the embedding of the features from all the other fields. However, in many applications, the strength of interactions between the same features can vary across different fields. For example, the interaction between a top-level domain and gender may be strong, whereas gender might interact only weakly with country.

Field-aware Factorization Machines (FFM) \citep{FFM} address this issue by assigning distinct embeddings for each feature-field interaction. This strategy allows for a more nuanced modeling of the interactions, but it significantly increases the complexity of the model. To mitigate this, Field-weighted Factorization Machine FwFM~\citep{FwFM} uses instead an additional \emph{interaction strength matrix} that weights interactions between fields. While this approach reduces the number of parameters compared to FFMs, it still requires iterating over all possible pairs of fields, which can be computationally expensive for large inputs. Adding structure to the interaction strength matrix, such as a low-rank constraint, reduces the computational cost of evaluating pairwise field interactions \cite{almagor2022you,shtoff2024low}.

The power of deep neural networks to implicitly learn feature interactions has also been considered \citep{cheng2016wide, guo2017deepfm}. A further interesting research direction studies two-stream approaches that combine a deep neural network with models that explicitly compute feature interactions \citep{lian2018xdeepfm, DCNV2, mao2023finalmlp, coleman2024unified}. While these approaches can yield state-of-the-art performance, the inherent complexity of these deep architectures results in larger inference times.

Given the constraints discussed above, we focus on models that learn explicit feature interactions without a deep neural network component. Our goal is to design a model that has a small number of parameters and exhibits a small inference time.
Our work builds on the Field-weighted Factorization Machine (FwFM) model, which learns a matrix to weight interactions between different fields. Following prior work, we first impose a low-rank structure on this matrix, enabling efficient evaluation of second-order feature interactions. Our main contribution is to extend this approach to higher-order interactions by applying a Canonical Polyadic (CP) decomposition to a tensor encoding the interaction strengths.
Based on this idea, we introduce a variant of the FwFM model that allows the evaluation of higher-order interactions while retaining the feasibility of fast online inference. Our experimental findings indicate that the proposed method achieves a favorable trade-off between accuracy and prediction time compared to existing approaches in the literature.

Our contributions can be summarized as follows:
\begin{compactenum}
\item We introduce a new model dubbed tensorFM, which is a higher-order variant of FwFM with a low-rank approximation of second and higher-order field interactions. 
\item We formally prove the inference complexity of tensorFM and its direct predecessors. The inference complexity of tensorFM is $O(nkrd^2)$, where $n$ is the number of fields, $k$ is the size of feature embeddings, $d$ is the highest order of interactions used, and $r$ is the rank used for the approximation of the higher-order field interactions. This is potentially much lower than the $O(n^2k)$ complexity of FwFM, particularly when the number of fields is large, or more precisely when $n > rd^2$. However, our model allows the possibility of modeling higher-order interactions. 
\item We provide an empirical comparison study between the new algorithm and several baseline models on three benchmark datasets.
\end{compactenum}

The paper is organized as follows. Section~\ref{sec:preliminaries} introduces the notation and formally discusses the direct predecessors to our proposed model. In Section~\ref{sec:tensor-fm}, we derive the tensorFM model. 
Section~\ref{sec:related_work} discusses the relation of our model to existing methods in the literature. In Section~\ref{sec:empirical_studies}, we report the findings of our experiments on real and synthetic data.


\section{Preliminaries}
\label{sec:preliminaries}
We introduce the notation used in the paper. Scalars are denoted by lowercase letters (e.g., $x$),  column vectors by bold lowercase letters (e.g., $\bm{x}$), matrices by bold italic uppercase letters (e.g., $\mX$), and tensors by bold uppercase letters (e.g., $\tX$). The transpose of a column vector $\bm{x}$ is denoted with $\vx^T$. Given a matrix $\mX \in \mathbb{R}^{p \times q}$, we denote with the corresponding bold lowercase letter $\bm{x}_i \in \mathbb{R}^p$ the $i$-th column of $X$, where $1 \leq i \leq q$.  We use $\langle \cdot, \cdot \rangle$ to denote the inner product between two vectors. Given two tensors $\tX, \tY \in \mathbb{R}^{p_1 \times \ldots \times p_\ell}$, we denote with $\tX \circ \tY$ their Hadamard Product, we define their Frobenius inner product as $\langle \tX, \tY \rangle_F = \sum_{i_1,\ldots,i_\ell} \tX_{i_1,\ldots,i_\ell} \cdot \tY_{i_1,\ldots,i_\ell}$, and denote by $\lVert \cdot \rVert_F$ the Frobenius norm induced by this inner product. For $\ell \ge 1$ and vectors $\vx_1,\ldots,\vx_\ell$, their Kronecker (outer) product is the order-$\ell$ tensor $\vx_1 \otimes \cdots \otimes \vx_\ell$ with entries $(\vx_1 \otimes \cdots \otimes \vx_\ell)_{j_1,\ldots,j_\ell} = \prod_{t=1}^\ell x_{t,j_t}$. 

\textbf{Setting.} We consider a supervised learning setting over categorical data. In particular, we assume data composed of $n \geq 1$ categorical fields, where each field represents a specific attribute (e.g., the nationality of a user). Each field $j$, with $1 \leq j \leq n$, can take one of $m_j$ possible categorical values. We represent each input as a binary vector obtained by concatenating the one-hot encodings of the $n$ fields. Formally, let $\mathcal{D}_{m,n} = \{0,1\}^m$ with $m = \sum_{j=1}^n m_j$ be our input domain, where each data point $\pmb{x} \in \mathcal{D}_{m,n}$ is a binary feature vector, and we say that the data point contains feature $i$ if and only if $x_i = 1$. Each data point $\bm{x} \in \mathcal{D}_{m,n}$ has exactly $n$ features, i.e., $\lVert \bm{x} \rVert_1 = n$, one feature for each field.

The presentation in this work primarily focuses on binary classification tasks, where each input data point is associated with a binary label. We consider prediction functions of the form $\sigma \circ f : \mathcal{D}_{m,n} \to [0,1]$, where $\sigma(x) = 1/(1+e^{-x})$ denotes the sigmoid function and $f : \mathcal{D}_{m,n} \to \mathbb{R}$ is a scoring model. While the sigmoid transformation is natural for classification, the underlying framework and proposed models directly extend to other supervised tasks, such as regression, where the sigmoid transformation is unnecessary.

Given a model $f : \mathcal{D}_{m,n} \to \mathbb{R}$, we say that $f$ has inference complexity $t$ if there exists a procedure that evaluates $f(\bm{x})$ for any given $\bm{x} \in \mathcal{D}_{m,n}$ with computation complexity in $O(t)$. The goal is to design a scoring model $f$ that is accurate and has small inference complexity. The challenge of this problem is that the input $\bm{x}$ is high-dimensional and sparse. In our work, we are interested in a regime where both $m$ and $n$ are large.

\textbf{First-order interactions.} The simplest model is a linear scoring model
\begin{align}
\label{eq:linear}
f_{\text{lin}}(\bm{x}) = \langle \bm{w}, \bm{x} \rangle + b \enspace ,
\end{align}
where $\bm{w} \in \mathbb{R}^m$ and $b \in \mathbb{R}$ are parameters of the model. The resulting prediction function is also known as logistic regression \citep{cramer2002origins}, and it can be computed in time $O(n)$ for any sparse input $\bm{x} \in \mathcal{D}_{m,n}$. This model only captures first-order interactions between features, as it evaluates the score based only on what features are present in the input $\bm{x}$ independently from the others. 

\textbf{Second-order interactions.} The evaluation of cross-interactions between features is crucial to obtain more accurate predictions. Naively, those second-order interactions can be expressed by adding the score 
\begin{align}
\label{eq:second-degree}
    f_{\text{second}}(\bm{x}) = \langle \bm{x} \bm{x}^T, \mW  \rangle_F = \sum_{i,j} x_i x_j \mW_{i,j} \enspace ,
\end{align} where $\mW \in \mathbb{R}^{m \times m}$ is a parameter describing the weights of a second degree polynomial.  The computational time to evaluate \eqref{eq:second-degree} is $O(n^2)$, and a naive implementation of those second-order interactions presents two related shortcomings. First, an efficient evaluation of \eqref{eq:second-degree} would require storing the $m\times m$ matrix in primary memory, which is unfeasible for large $m$. Also, the matrix $\mW$ has $O(m^2)$ parameters; thus, a large number of samples are required to obtain a statistically significant solution that generalizes.

There are two mainstream solutions to approach this problem. The first one is to hash the $O(m^2)$ pairs of indexes $(i,j) \in \{1,\ldots,m\}^2$ into $\ell \ll m^2$ buckets, and associate each bucket with a single weight parameter. In this case, the computational complexity is still $O(n^2)$, however, the memory and the number of parameters are $O(\ell)$. In a seminal work, \citet{rendle2010factorization} propose the Factorization Machine (FM) model as another solution to tackle this problem.

In the FM model, feature $i$, with $1 \leq i \leq m$, is associated with a vector $\bm{v}_i \in \mathbb{R}^k$ of size $k$. The vectors $\{ \bm{v}_i \}_{i=1}^m$ are also referred to as \emph{embedding vectors}, and they are parameters of the model. 
Given an input $\bm{x} \in \mathcal{D}_{m,n}$, we denote with \begin{align}
\label{eq:A_x}\mA_{\bm{x}} =  [\bm{a}_{\bm{x},1} | \ldots | \bm{a}_{\bm{x},n}] \in \mathbb{R}^{k \times n} \end{align} the \emph{embedding matrix} of $\bm{x}$, where the column $\bm{a}_{\bm{x},j}$ is the embedding vector $\bm{v}_i$ of the active feature, i.e., $x_i=1$, associated with the field $j$. 
The FM model proposes to evaluate the second-order interactions by computing
\begin{align*}
f_{FM}(\bm{x}) & = \sum_{i=1}^m \sum_{j = i +1}^m x_i x_j \langle \bm{v}_i, \bm{v}_j \rangle 
                = \frac{1}{2}\langle \mA_{\bm{x}}^T \mA_{\bm{x}}, \bm{1}_n - \mI_n \rangle_F \enspace ,
\end{align*}
where $\bm{1}_n$ is an $n \times n$ matrix with all entries equal to $1$, and $\mI_n$ is an $n \times n$ identity matrix. As an alternative viewpoint, it is easy to see that $f_{FM}(\bm{x}) = \frac{1}{2}\langle \bm{x}\bm{x}^T, 
(\mV^T\mV) \circ (\bm{1}_n - \bm{I}_n) \  \rangle_F $, where $\mV \in \mathbb{R}^{k \times n}$ is the matrix whose columns are the embedding vectors of the features.
This view illustrates that the FM model provides a factorization of the full $m \times m$ weight matrix $\mW$ introduced in Equation~\ref{eq:second-degree} in terms of the embedding vectors $\bm{V}$, hence its name. FM has $O(mk)$ parameters, and it has inference complexity $O(nk)$.

Building on the idea of factorizing the coefficients of higher order interactions, by associating each feature with an embedding vector, a lot of different prediction models have been proposed throughout the last decade \citep[e.g.,][]{FFM, guo2017deepfm, sun2021fm2}. Relevant to our work is the Field-weighted Factorization Machine model (FwFM)~\citep{FwFM}. Compared to the original FM model, the FwFM model has additional parameters $\mS \in \mathbb{R}^{n \times n}$ that express the interaction strength between fields, i.e., the level of correlation between a field pair and the label. The intuition is that we would like to weigh more those cross-interactions between features of fields that have a stronger predictive power. The model can be expressed as follows:
\begin{align}
\label{eq:FwFM}
f_{\text{FwFM}}(\bm{x}) = \frac{1}{2}\langle \mA_{\bm{x}}^T \mA_{\bm{x}}, \mS \rangle \enspace ,
\end{align}
where $\mS$ is optimized in the space of symmetric matrices with zero diagonal. The FwFM has higher expressivity than an FM model but requires $O(n^2)$ additional parameters, and the inference complexity also worsens to $O(n^2 k)$. The work of \citep{shtoff2024low} approximates the field interaction matrix $S$ using a low-rank decomposition of rank $\rho$ to improve the complexity to $O(\rho n k)$, and our work can be seen as a direct generalization of the idea to higher orders of field interaction tensors.

\textbf{Higher-order interactions.}
The factorization machine model has been extended to take into account higher-order interactions \citep{FM}. Given $p$ vectors $\bm{v}_1,\ldots,\bm{v}_p \in \mathbb{R}^k$, we let their $p$-way inner product be $\langle \bm{v}_1, \ldots, \bm{v}_p \rangle \doteq \sum_{i=1}^k \prod_{j=1}^p v_{i,j}$. In the higher-order FM model (HOFM), the $d$-order interactions can be expressed by evaluating $
    f_{p-FM}(\bm{x}) = \sum_{i_1 < i_2 < \ldots < i_d} x_{i_1} \ldots x_{i_d} \langle \bm{v}_{i_1}, \ldots, \bm{v}_{i_d} \rangle . $
Unfortunately, a naive evaluation of the above expression requires an exponential number of steps $\Omega(n^d)$.  To address this issue, different works propose variants of this model based on the ANOVA kernel or the polynomial kernel that have inference complexity $O(dnk)$ \citep{HOFM, blondel2016polynomial}. Additional methods are discussed in the related work.

\section{\texorpdfstring{$\textrm{tensorFM}$}{tensorFM} algorithm}
\label{sec:tensor-fm}

We derive the new tensorFM algorithm building upon the models discussed in the previous section. As a warm-up, we first consider second-order field interactions, observing that imposing additional low-rank structure on the matrix representing these interactions reduces inference complexity. This observation motivates us to generalize the low-rank approach to tensors representing higher-order field interactions, which is the main contribution of this section.

\subsection{Warmup: Low-Rank Second-Order interactions}
\label{sub-section}
As a warm-up, we first consider second-order interactions.
The FM and FwFM models can both be written in the unified form:
\begin{align}
\label{eqref:structure}
f(\bm{x}) = \langle \mA_{\bm{x}}^\top \mA_{\bm{x}}, \bm{S} \rangle_F \enspace ,
\end{align}
where $\bm{S} = \frac{1}{2}(\bm{1}_n - \bm{I}_n)$ is fixed for FM models, and $\bm{S}$ is a symmetric zero-diagonal parameter matrix for FwFM models. While both models follow the structure of Equation~\ref{eqref:structure}, their inference complexities differ significantly. FM models admit inference in time $O(nk)$, whereas FwFM models generally require $O(n^2k)$ time. The quadratic dependence on the number of fields in FwFM arises from the absence of structure in $\bm{S}$. 

A natural approach to reducing inference complexity is to impose low-rank structures, a principle already exploited in the original factorization machine model. In the case of second-order interactions, it is possible to evaluate Equation~\ref{eqref:structure} efficiently as a function of the rank of $\bm{S}$. The following proposition makes this explicit. Similar results appear in prior work \citep{shtoff2024low}, but we include the simple argument here for completeness.

\begin{proposition} 
\label{prop:intuition}
Let $\mS \in \mathbb{R}^{n \times n}$ be a rank $r$ matrix. Then, after an appropriate preprocessing step depending only on $\mS$, it is possible to evaluate $\langle \mA_{\bm{x}}^T \mA_{\bm{x}}, \mS \rangle_F$ in time $O(rnk)$ for any $\bm{x} \in \mathcal{D}_{m,n}$.
\end{proposition}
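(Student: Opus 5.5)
The plan is to factor $\mS$ once, in the preprocessing step, and then use the factors so that $\mA_{\bm{x}}^T \mA_{\bm{x}}$ never has to be formed. Since $\mS$ has rank $r$, it admits a decomposition
\begin{align*}
\mS = \mU \mV^T = \sum_{\ell=1}^r \bm{u}_\ell \bm{v}_\ell^T, \qquad \mU, \mV \in \mathbb{R}^{n \times r}.
\end{align*}
One way to get it is a thin SVD, taking $\mU$ as the left singular vectors scaled by the singular values and $\mV$ as the right singular vectors. A rank-revealing QR or Gaussian elimination would also work. This factorization depends only on $\mS$, so it costs nothing at inference time, and its cost (e.g., $O(n^3)$) is charged to preprocessing. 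If $\mS$ is symmetric, as in FwFM, we could instead use an eigendecomposition $\mS = \sum_\ell \lambda_\ell \bm{u}_\ell \bm{u}_\ell^T$. This is not needed, however.

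The key step is an algebraic rewriting of the Frobenius inner product using the cyclic property of the trace. We have
\begin{align*}
\langle \mA_{\bm{x}}^T \mA_{\bm{x}}, \mS \rangle_F = \operatorname{tr}\bigl(\mA_{\bm{x}}^T \mA_{\bm{x}} \mS^T\bigr) = \operatorname{tr}\bigl(\mA_{\bm{x}}^T \mA_{\bm{x}} \mV \mU^T\bigr) = \operatorname{tr}\bigl((\mA_{\bm{x}}\mU)^T (\mA_{\bm{x}} \mV)\bigr) = \langle \mA_{\bm{x}} \mU, \mA_{\bm{x}} \mV \rangle_F .
\end{align*}
Equivalently, the expression equals $\sum_{\ell=1}^r \langle \mA_{\bm{x}} \bm{u}_\ell, \mA_{\bm{x}} \bm{v}_\ell\rangle$. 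Each vector $\mA_{\bm{x}}\bm{u}_\ell = \sum_{j=1}^n u_{j,\ell}\, \bm{a}_{\bm{x},j}$ is a weighted sum of the $n$ active embeddings.

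For the complexity count, given $\bm{x}$ we first assemble $\mA_{\bm{x}}$ by looking up the $n$ active embedding vectors, which takes $O(nk)$ time: the sparse input lists one active feature per field. Computing $\mA_{\bm{x}}\mU$ and $\mA_{\bm{x}}\mV$ costs $O(knr)$ each. The final Frobenius inner product of two $k \times r$ matrices costs $O(kr)$. The total is $O(rnk)$.

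I do not expect any real obstacle. The only point to state carefully is that the factorization is computed once and is independent of $\bm{x}$, so it belongs to the preprocessing step. Along with this, the proof should note that the naive route of forming the $n\times n$ Gram matrix $\mA_{\bm{x}}^T\mA_{\bm{x}}$, which costs $O(n^2k)$, is avoided entirely by the trace rearrangement.
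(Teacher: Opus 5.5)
Your proposal is correct and follows the paper's proof essentially step for step. You factor $\mS=\mU\mV^T$ via the SVD, absorbing the singular values into one factor, then use the trace form of the Frobenius product and cyclicity to rewrite the expression as $\langle \mA_{\bm{x}}\mU, \mA_{\bm{x}}\mV\rangle_F$, and compute the two $k\times r$ products in $O(rnk)$ time; your extra accounting of the $O(nk)$ embedding lookup and the $O(kr)$ final inner product is consistent with the paper's argument and slightly more explicit.
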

\begin{proof}
Since $\mS$ is a rank $r$ matrix, it can be factorized using the singular value decomposition into three matrices $\mU',\mV' \in \mathbb{R}^{n \times r}$ and $\mD' \in \mathbb{R}^{r \times r}$ such that $\mS = \mU' \mD' \mV'^T$. Let $\mU = \mU'\mD'$, and $\mV = \mV'$, so that $\mS=\mU\mV^T$, where $\mU,\mV \in \mathbb{R}^{n \times r}$. The following chain of inequality holds
\begin{align*}
\langle \mA_{\bm{x}}^T \mA_{\bm{x}}, \mS \rangle_F  = \langle \mA_{\bm{x}}^T \mA_{\bm{x}}, \mU\mV^T \rangle_F   
= \mathrm{Tr}( \mA_{\bm{x}}^T \mA_{\bm{x}} \mU\mV^T) 
= \mathrm{Tr}( \mV^T \mA_{\bm{x}}^T \mA_{\bm{x}} \mU) 
= \langle  \mA_{\bm{x}} \mV, \mA_{\bm{x}} \mU \rangle_F \enspace,
\end{align*}
where the second and fourth equality are due to the relation between Frobenius inner product and trace, and the third equality is due to the cyclic property of the trace operator. The matrix multiplications $\mA_{\bm{x}}\mU$ and $\mA_{\bm{x}}\mV$ can be computed in time $O(r n k)$, and the statement immediately follows.
\end{proof}

The inference complexity $O(nk)$ of the FM models follows as an immediate corollary of the above proposition. In fact,
\begin{align}
 f(\bm{x}) = \langle \mA_{\bm{x}}^T \mA_{\bm{x}}, \frac{1}{2}(\bm{1}_n - \bm{I}_n) \rangle_F
 =  \frac{1}{2}\langle \mA_{\bm{x}}^T \mA_{\bm{x}}, \bm{1}_n \rangle_F - \frac{1}{2}\langle \bm{A}_{\bm{x}}^T \bm{A}_{\bm{x}}, \bm{I}_n \rangle_F \enspace \label{tmp1}
\end{align}
The second term of \eqref{tmp1} can be computed in time $O(nk)$ as $I_n$ has only $n$ non-zero entries, whereas the first term can be computed in time $O(nk)$ by noticing that $\bm{1}_n$ is a rank $1$ matrix. 

This discussion elucidates the following intuition. It is possible to obtain a model with low inference complexity that computes the second-order interactions as in \eqref{eqref:structure} as long as the weights $\mS$ have low rank. In our work, we build on these results and extend them to higher-order interactions.

\subsection{Extension to Higher-Order Interactions}
 Let $\mA_{\bm{x}} \in \mathbb{R}^{k \times n}$ be defined as in \eqref{eq:A_x}, where the columns of $\mA_{\bm{x}}$ are the embedding vectors of the features in $\bm{x}$. 
We denote with $\overline{\bm{a}}_{\bm{x},1},\ldots, \overline{\bm{a}}_{\bm{x},k}$ the $k$ rows of the matrix $\mA_{\bm{x}}$.

Let $2 \leq d \leq n$ be the dimensionality of the order of interactions that we want to consider (e.g., $d=2$ for second-order interactions). For a given $d$, our model considers $\ell$-order interactions for all $1 \leq \ell \leq d$.
In particular, let $\tS^{[\ell]} \in \mathbb{R}^{n \times \ldots \times n}$ be an $\ell$ order tensor for $2 \leq \ell \leq d$. In our work, we consider the following model:
\begin{align}
\label{model-definition}
f_{tFM}(\bm{x}) = f_{\text{lin}}(\bm{x})    +\sum_{\ell=2}^d \sum_{i_1, \ldots, i_\ell} \tS^{[\ell]}_{i_1,\ldots,i_\ell} \cdot \langle \bm{a}_{\bm{x},i_1}, \ldots, \bm{a}_{\bm{x},i_\ell} \rangle
\end{align}
Analogous to how the matrix $\bm{S}$ models second-order field interaction strengths in FwFM, here each tensor $\tS^{[\ell]}$ models $\ell$-order field interaction strengths.
Without any additional constraint, the inference complexity to evaluate this model for any given $\bm{x} \in \mathcal{D}_{m,n}$ is equal to $O(d^2 k n^d)$, which is intractable even for small $d$. To reduce the inference complexity of the model, we build on the intuition developed in \Cref{sub-section}. In particular, we want to constrain a notion of rank of the tensor $\tS^{[\ell]}$ so that it is possible to evaluate \Eqref{model-definition} efficiently.

Given an $\ell$ order tensor $\tW \in \mathbb{R}^{n \times \ldots \times n}$, we say that $\tW$ has canonical polyadic (CP) rank at most $r$ if there exists a collection of $\ell$ matrices $\mU^{[1]},\ldots,\mU^{[\ell]} \in \mathbb{R}^{n \times r}$ such that
\begin{align}
\label{eq:cp-decomposition}
\tW =  \sum_{i=1}^r \bm{u}^{[1]}_i \otimes \ldots \otimes \bm{u}^{[\ell]}_i  \enspace .
\end{align}
The CP rank of $\tW$ is the minimum $r \geq 0$ for which such a decomposition exists \citep{hitchcock1927expression, kolda2009tensor}. Note that for $\ell=2$, the CP rank coincides with the rank of the matrix $\mW$, and the decomposition in \eqref{eq:cp-decomposition} can be obtained through the singular value decomposition (i.e., $\mU^{[1]} = \mU$ and $\mU^{[2]}=\mV$, where $\mU$ and $\mV$ are defined as in the proof of \Cref{prop:intuition}). The following result shows that the CP rank can be used to upper bound the inference complexity of evaluating higher-order interactions, and it is a generalization of \Cref{prop:intuition}.
\begin{lemma}
\label{prop:fast-computation}
Let $\mA_{\bm{x}} \in \mathbb{R}^{k \times n}$. Let $\tW^{[\ell]} \in \mathbb{R}^{n \times \ldots \times n}$ be a $\ell$-order tensor with CP rank $r$. Then, after an appropriate preprocessing step depending only on $\tW^{[\ell]}$, it is possible to evaluate $
T=\sum_{i_1, \ldots, i_\ell} \tW^{[\ell]}_{i_1,\ldots,i_\ell} \cdot \langle \bm{a}_{\bm{x},i_1}, \ldots, \bm{a}_{\bm{x},i_\ell} \rangle$ for any given input $\bm{x}$
with time complexity $O(\ell r k n)$.
\end{lemma}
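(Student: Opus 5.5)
The plan is to substitute the CP decomposition of $\tW^{[\ell]}$ into $T$ and exchange the order of summation, so that the sum over $n^\ell$ index tuples factorizes into a product of $\ell$ independent sums. In the preprocessing step, which depends only on $\tW^{[\ell]}$, we compute (or are given) matrices $\mU^{[1]},\ldots,\mU^{[\ell]} \in \mathbb{R}^{n\times r}$ with $\tW^{[\ell]} = \sum_{j=1}^r \bm{u}^{[1]}_j \otimes \cdots \otimes \bm{u}^{[\ell]}_j$, as guaranteed by \eqref{eq:cp-decomposition}. Thus $\tW^{[\ell]}_{i_1,\ldots,i_\ell} = \sum_{j=1}^r \prod_{t=1}^\ell U^{[t]}_{i_t,j}$. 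These factors are stored, and nothing about $\bm{x}$ is used.

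Next, we expand the $\ell$-way inner product as $\langle \bm{a}_{\bm{x},i_1},\ldots,\bm{a}_{\bm{x},i_\ell}\rangle = \sum_{s=1}^k \prod_{t=1}^\ell (\mA_{\bm{x}})_{s,i_t}$. Plugging both expansions into $T$ and swapping the finite sums gives
\begin{align*}
T = \sum_{j=1}^r \sum_{s=1}^k \sum_{i_1,\ldots,i_\ell} \prod_{t=1}^\ell U^{[t]}_{i_t,j} (\mA_{\bm{x}})_{s,i_t}
= \sum_{j=1}^r \sum_{s=1}^k \prod_{t=1}^\ell \Big( \sum_{i=1}^n (\mA_{\bm{x}})_{s,i} U^{[t]}_{i,j} \Big)
= \sum_{j=1}^r \sum_{s=1}^k \prod_{t=1}^\ell \big(\mA_{\bm{x}} \mU^{[t]}\big)_{s,j}.
\end{align*}
The middle equality is the key step. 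The summand is a product over $t$ of terms that each depend only on $i_t$, so the sum over the Cartesian product of indices factorizes. Equivalently, $T = \langle \mA_{\bm{x}}\mU^{[1]}, \ldots, \mA_{\bm{x}}\mU^{[\ell]}\rangle$, with the $\ell$-way product taken entrywise over all $kr$ entries. For $\ell=2$ this recovers the identity $\langle \mA_{\bm{x}}\mV, \mA_{\bm{x}}\mU\rangle_F$ from the proof of \Cref{prop:intuition}.

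For the complexity count, forming $\mA_{\bm{x}}$ takes $O(nk)$ time by reading off the $n$ active embeddings. Each product $\mA_{\bm{x}}\mU^{[t]}$ costs $O(knr)$, so all $\ell$ of them cost $O(\ell k n r)$. Taking the $\ell$-fold Hadamard product of the resulting $k\times r$ matrices and summing the entries costs $O(\ell k r)$. The total is therefore $O(\ell r k n)$.

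The only real obstacle is the factorization step. One must check that the unrestricted sum over all $(i_1,\ldots,i_\ell)\in\{1,\ldots,n\}^\ell$, including repeated indices as written in the statement, is exactly what allows the product to split. No diagonal corrections are needed here, unlike in the FM case of \eqref{tmp1}. The exchange of sums itself is routine because all sums are finite.
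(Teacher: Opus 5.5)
Your proposal is correct and follows essentially the same route as the paper: substitute the CP factors, use multilinearity to factorize the sum over index tuples, and evaluate the result. The entry $(\mA_{\bm{x}}\mU^{[t]})_{s,j}$ in your final formula is exactly the paper's $\langle \bm{u}^{[t]}_j, \overline{\bm{a}}_{\bm{x},s}\rangle$, which gives the $O(\ell r k n)$ cost, and your index bookkeeping is cleaner than the paper's displayed expansion, where the summation indices over $k$ and $r$ are briefly swapped.
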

\begin{proof}
The value $T$ can be rewritten as
\begin{align*}
T =  \left\langle \sum_{i=1}^{k} \underbrace{\bm{\overline{a}}_{\bm{x},i} \otimes \ldots \otimes \bm{\overline{a}}_{\bm{x},i}}_{\ell \text{ times} }, \tW^{[\ell]}\right\rangle_F \enspace .
\end{align*}
Since $\tW$ has CP rank $r$, there exists a collection of $\ell$ matrices $\mU^{[1]},\ldots,\mU^{[\ell]}$ that satisfy \Eqref{eq:cp-decomposition}, thus:
\begin{align*}
T =  \left\langle \sum_{i=1}^{k} \underbrace{\bm{\overline{a}}_{\bm{x},i} \otimes \ldots \otimes \bm{\overline{a}}_{\bm{x},i}}_{\ell \text{ times} }, \sum_{j=1}^r \bm{u}^{[1]}_j \otimes \ldots \otimes \bm{u}^{[\ell]}_j\right\rangle_F \enspace .
\end{align*}
Since the inner product is a bilinear operator, we have that
\begin{align*}
T = \sum_{i=1}^k \sum_{j=1}^r \left\langle \underbrace{\bm{\overline{a}}_{\bm{x},i} \otimes \ldots \otimes \bm{\overline{a}}_{\bm{x},i}}_{\ell \text{ times} }, \bm{u}^{[1]}_j \otimes \ldots \otimes \bm{u}^{[\ell]}_j \right\rangle_F
\end{align*}
If we expand the computations, we have that $T$ is equal to
\begin{align*}
T= \sum_{i=1}^k \sum_{j=1}^r \sum_{a_1,\ldots,a_{\ell}} \prod_{b=1}^{\ell} ( u^{[b]}_{i,a_b} \overline{{a}}_{\bm{x},j,a_b}) =\sum_{i=1}^k \sum_{j=1}^r \prod_{b=1}^{\ell} \left(  \sum_{a=1}^n  u^{[b]}_{i,a} \overline{{a}}_{\bm{x},j,a}  \right)  =  \sum_{i=1}^k \sum_{j=1}^r \prod_{b=1}^{\ell} \langle \bm{u}^{[b]}_j, \bm{\overline{\bm{a}}}_{\bm{x},i}\rangle  \enspace ,
\end{align*}
which can be computed in time $O(\ell n k r)$.
\end{proof}

The CP decomposition of Equation~\ref{eq:cp-decomposition} is not the only known tensor decomposition. In \Cref{app:hosvd}, we explore a parallel to \Cref{prop:fast-computation} based on another popular tensor decomposition strategy, known as higher-order singular value decomposition (HOSVD) or Tucker decomposition.

\subsection{Low-rank higher-order interactions: tensorFM} 
\label{subsec:tensorfmdefinition}
Equipped with \Cref{prop:fast-computation}, we are ready to formally introduce the tensorFM model.
As in a factorization model, we consider an embedding vector $\bm{v}_i \in \mathbb{R}^k$ for each feature $1 \leq i \leq n$. Our model is defined by parameters $d$ and a vector $\bm{r}=(r_2,\ldots,r_d)^T$, where $1 \leq r_\ell \leq n $  for any $2 \leq \ell \leq d$. For any $2 \leq \ell \leq d$, we have parameters $\mU^{[\ell]}_1, \ldots, \mU^{[\ell]}_\ell \in \mathbb{R}^{n \times r_{\ell}}$. Given those parameters, the tensor $\tS^{[\ell]}$ is defined according to the decomposition in \Eqref{eq:cp-decomposition}, i.e., $\tS^{[\ell]} = \sum_{i=1}^{r_{\ell}} \bm{u}^{[\ell]}_{1,i} \otimes \ldots \otimes \bm{u}^{[\ell]}_{\ell,i}$. The tensorFM model is defined as in \eqref{model-definition} using tensors $\tS^{[2]},\ldots, \tS^{[\ell]}$. Given $d$ and $\bm{r}$, we define an instance of this model as tensorFM($\bm{r},d$). This model has $O(m k + n \sum_{\ell=2}^d \ell r_{\ell})$ parameters. The inference complexity of this model is $O( nk \sum_{\ell=2}^{d} \ell \cdot r_{\ell})$ and follows by using \Cref{prop:fast-computation}.

\begin{theorem}
Let $d \geq 2$ and $\bm{r} = (r_2,\ldots,r_d) \in \{1,\ldots,n\}^{d-1}$. The model tensorFM($\bm{r},d$) has inference complexity $O( nk \sum_{\ell=2}^{d} \ell \cdot r_{\ell})$.
\end{theorem}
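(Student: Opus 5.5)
The plan is to decompose the evaluation of $f_{tFM}(\bm{x})$ in \eqref{model-definition} into its constituent terms, bound the cost of each one separately, and then add up the bounds.

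\textbf{Step 1: form the embedding matrix.} Given $\bm{x} \in \mathcal{D}_{m,n}$, I locate its $n$ active features, one per field, using the sparse representation. I then gather their embeddings into $\mA_{\bm{x}}$ as in \eqref{eq:A_x}. This takes $O(nk)$ time.

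\textbf{Step 2: the linear term.} The term $f_{\text{lin}}(\bm{x})$ in \eqref{eq:linear} is a sum of $n$ weights plus a bias. It costs $O(n)$.

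\textbf{Step 3: the interaction terms.} For each $2 \le \ell \le d$, the $\ell$-th summand is exactly the quantity $T$ of \Cref{prop:fast-computation}, taken with $\tW^{[\ell]} = \tS^{[\ell]}$. By construction, $\tS^{[\ell]}$ has CP rank at most $r_\ell$, because it is defined through the decomposition in \Eqref{eq:cp-decomposition} with factor matrices $\mU^{[\ell]}_1,\ldots,\mU^{[\ell]}_\ell \in \mathbb{R}^{n\times r_\ell}$.

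Two points need checking here.
\begin{itemize}
\item The preprocessing in \Cref{prop:fast-computation} consists only of finding a CP decomposition. tensorFM stores these factors directly as parameters, so no preprocessing cost is incurred at inference time.
\item The lemma is stated for CP rank exactly $r$, but its argument only uses the existence of a rank-$r$ decomposition. It therefore applies verbatim with $r_\ell$ factors, whatever the true CP rank is.
\end{itemize}
Concretely, I compute the identity from the end of that proof, $\sum_{i=1}^k\sum_{j=1}^{r_\ell}\prod_{b=1}^{\ell}\langle \bm{u}^{[\ell]}_{b,j},\overline{\bm{a}}_{\bm{x},i}\rangle$. Evaluating all the products $\mA_{\bm{x}}\mU^{[\ell]}_b$ for $b=1,\ldots,\ell$ costs $O(\ell r_\ell nk)$. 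Multiplying the $\ell$ factors and summing over $i,j$ then adds only $O(\ell r_\ell k)$. So the $\ell$-th term costs $O(\ell r_\ell nk)$.

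\textbf{Step 4: total cost.} Summing over $\ell$ and adding the $O(nk)$ from Steps 1 and 2 gives a total of $O(nk + nk\sum_{\ell=2}^d \ell r_\ell)$. Since $r_2 \ge 1$, the sum satisfies $\sum_\ell \ell r_\ell \ge 2$, so the $O(nk)$ term is absorbed. The bound is therefore $O(nk\sum_{\ell=2}^d \ell\, r_\ell)$, as claimed.

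I do not expect a genuine obstacle, since the argument is a direct application of \Cref{prop:fast-computation}. The only point needing care is the bookkeeping in Step 3: the lemma must be applied with the stored factor matrices rather than a freshly computed decomposition, so that no per-input preprocessing is charged. I would also note that the indexing of $u$ and $\overline{a}$ in the displayed identity of that proof should be read as $\langle \bm{u}^{[b]}_j,\overline{\bm{a}}_{\bm{x},i}\rangle$, which is what the cost count uses.
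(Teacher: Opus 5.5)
Your proposal is correct and follows the same route as the paper's proof. You bound the linear term by $O(n)$, apply \Cref{prop:fast-computation} to each $\tS^{[\ell]}$ using its stored rank-$r_\ell$ CP factors to get $O(\ell r_\ell k n)$, and sum over $\ell$. Your extra bookkeeping fills in points the paper leaves implicit: the $O(nk)$ cost of gathering $\mA_{\bm{x}}$, absorbed because $r_2 \geq 1$; the absence of per-input preprocessing, since the factors are model parameters; and the fact that the lemma needs only a decomposition with $r_\ell$ terms, not exact CP rank.
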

\begin{proof}
Fix an arbitrary $\bm{x} \in \mathcal{D}_{m,n}$. The linear component of $f_{lin}(\bm{x})$ can be computed in time $O(n)$. For any $2 \leq \ell \leq d$, we can compute $\sum_{i_1, \ldots, i_\ell} \tS^{[\ell]}_{i_1,\ldots,i_\ell} \cdot \langle \bm{a}_{\bm{x},i_1}, \ldots, \bm{a}_{\bm{x},i_\ell}\rangle$ in time $O(\ell r_{\ell} k n)$ using \Cref{prop:fast-computation}. The statement follows by summing over all possible values of $\ell$.
\end{proof}

\textit{Remark 1.} This inference complexity can be expressed using the looser bound
$O(nkrd^2)$, where $r$ is the highest rank in $\bm{r}$.

\textit{Remark 2.}
We do not impose any symmetry constraint on the tensors $\tS^{[\ell]}$.
Given any $\ell$-order tensor $\tS^{[\ell]}$, consider its symmetrization
\[
\hat{\tS}^{[\ell]}_{i_1,\ldots,i_\ell}
    = \frac{1}{\ell!}\sum_{\pi \in \mathbb{S}_\ell}
      S^{[\ell]}_{i_{\pi(1)},\ldots,i_{\pi(\ell)}},
\]
where $\mathbb{S}_\ell$ denotes the set of all permutations $\pi$ of the index tuple $(i_1,\ldots,i_\ell)$.
Because the multilinear form
\[
\sum_{i_1,\ldots,i_\ell} \tS^{[\ell]}_{i_1,\ldots,i_\ell}
    \,\langle \bm{a}_{\bm{x},i_1},\ldots,\bm{a}_{\bm{x},i_\ell}\rangle
\]
is invariant under such index permutations, the tensors $\tS^{[\ell]}$ and $\hat{\tS}^{[\ell]}$ yield the same output.
Thus, a symmetry constraint would not reduce the expressivity of our model.
On the other hand, our model \emph{does} allow self-interactions (i.e., the multilinear form has index tuples with repeated coordinates), which marks a key difference from other models such as FwFM.


\textbf{Comparison with HOFM}. A key distinction between our approach and higher-order factorization machines (HOFM, \cite{FM}) lies in the treatment of the interaction tensors $\tS^{[\ell]}$ in \Eqref{model-definition}. In HOFM, $\tS^{[\ell]}$ is fixed a priori and is chosen to allow efficient computation of the resulting polynomial via mathematical identities such as Newton’s identities. This is analogous to how the original FM model fixes its interaction matrix.

In contrast, our framework aims to learn the tensor $\tS^{[\ell]}$ for each order $\ell$, while still maintaining computational efficiency. This is achieved by constraining $\tS^{[\ell]}$ to have low CP rank, which ensures that the evaluation of the model remains tractable even when the interaction structure is learned from data. 

{\textbf{Numerical Features.}} While the focus is on categorical features, our method (like other factorization machine-based approaches) naturally extends to numerical features. When $x_i \in \mathbb{R}$ is a numerical feature, we add the corresponding column $x_i \bm{v}_i$ to $\mA_{\bm{x}}$, where $\bm{v}_i$ is the embedding vector associated with that feature. Recent works have proposed alternative embeddings for numerical features that are compatible with our setting, where each field is associated with a single embedding vector \citep{gorishniy2022embeddings,Shtoff2024FunctionBE,rugamer2024scalable}.

\section{Related Work} 
\label{sec:related_work}
For $d=2$, the second-order interactions of tensorFM($d,r$) model coincide with \eqref{eqref:structure}, where $S$ is parameterized as the product $S=U^{[2]}_1 {U^{[2]T}_2}$ of two rectangular matrices. In the optimization literature, this decomposition is common to optimize a low-rank matrix. Thus, for $d=2$, our model can be seen as a modified version of FwFM that allows a trade-off between the number of parameters $O(nr)$ and the inference complexity $O(nkr)$ that is regulated by the rank parameter $r$ (these values coincide with the ones of the FwFM models for $r=n$). The idea of using low-rank matrix decompositions has also been applied in recent variants of factorization machines for the special case of $d=2$. \citet{almagor2022you} propose a new method where each field $i$, with $1 \leq i \leq n$ is associated with a parameter $U_i \in \mathbb{R}^{k \times r}$, where $r$ is a rank parameter. The cross-interaction between the features of fields $i$ and $j$ is computed as $\bm{v}_i^T \mM_{ij} \bm{v}_j$, where $M_{ij} = \mU_i \bm{\Lambda}  \mU_j^T$, and $\bm{\Lambda} \in \mathbb{R}^{k \times k}$ is another parameter. While this work shares similar ideas, their approach is limited to second-order interactions and does not straightforwardly extend to higher-order interactions.
A similar low-rank decomposition of the field-interaction matrix has also been proposed in other recent work \citep{shtoff2024low}, where the authors introduce the interaction matrix
$
\mS \;=\; \mU \Lambda U^\top \;-\; \bigl[ (\mU \bm{\Lambda} \mU^\top) \odot \mI_n\bigr],
$
with $\mU \in \mathbb{R}^{n \times r}$, $\bm{\Lambda} \in \mathbb{R}^{r \times r}$ is diagonal, and $\odot$ denotes the element-wise (Hadamard) product.
This construction is closely related to our model, with the additional modeling constraint that $S$ has zero-diagonal. Our model generalizes the ideas of the aforementioned papers beyond second-order interactions ($d=2$) and can thus capture higher-order interactions.

For $d>2$, the tensorFM model considers higher-order interactions between features' embeddings. The first HOFM was proposed in the original work on FM \citep{rendle2010factorization, FM}, and it considers a fixed and given weighting of the higher-order interactions, which is equivalent to our model \eqref{model-definition} with fixed tensors $\tS^{[2]},\ldots,\tS^{[\ell]}$. The variants of HOFM that consider the ANOVA kernel \citep{HOFM} and the polynomial kernel \citep{blondel2016polynomial} can be seen as a special selection of $\tS^{[2]},\ldots, \tS^{[\ell]}$ that allows a fast computation of the model. Conversely, in our work, rather than considering a fixed weighting of the higher-order interactions between feature's embeddings of the $n$ fields, we consider a more expressive model that can learn the tensors $\tS^{[2]},\ldots, \tS^{[\ell]}$, while guaranteeing that the resulting model with this weighting can be still computed efficiently. Our method can be seen as a generalization of the FwFM model that enables us to both improve the inference complexity and consider higher-order interactions.

The aforementioned models and our work explicitly compute a polynomial over the cross-products between feature embeddings. In the literature, many other models have been proposed that compute higher-order interactions through specific neural network architectures.

The attention-aware factorization machine is another factorization machine model that considers a polynomial over second-order interactions as in \eqref{eqref:structure}, however the weights $\mS$ are chosen depending on $\mA_{\bm{x}}$ through a neural attention network \citep{AFM}. This architecture requires the computation of all the dot products between the pairs of vectors' embeddings of the features of $\bm{x}$, thus the inference complexity is $\Omega(n^2 k)$. Another model called AutoInt automatically learns high-order feature interactions through a multi-head self-attentive neural network, but its inference complexity is still $\Omega(n^2)$.

The Cross Interaction Network proposes a network model that can provably approximate a special class of $\ell$ degree polynomial over cross-interactions between embedding vectors \citep{lian2018xdeepfm}. However, their analysis of this polynomial approximation uses a version of this model for which the inference complexity is $O(n^3 k\ell)$, which is cubic in $n$.

Given $\bm{x}$, the cross-networks receive in input the concatenation of the vectors $\bm{y} = (\bm{a}_{\bm{x},1}, \ldots, \bm{a}_{\bm{x},n}) \in \mathbb{R}^{n\cdot k}$, and they iteratively transform $\bm{y}$ with a forward process for $\ell$ steps, where $\ell$ is the number of layers \citep{wang2017deep,DCNV2}. It is possible to show that the CN networks express a special class of $\ell+1$ degree polynomials over the components $y_1,\ldots,y_{nk}$. Compared to our method, the cross-interactions are measured at the bit level (using the vector $\bm{y}$), rather than at the feature level. Our model has the advantage of being interpretable, as it explicitly learns the interaction strength of cross-interactions between different fields.

\textbf{Deep Networks.} In recent work, it has been shown that a properly fine-tuned Deep Neural Network (DNN) with a large enough layer size can be competitive with many baselines for recommender systems. In the literature, DNNs are often trained in parallel with a simpler model that captures lower-order interaction (e.g., FM, CN, CIN), i.e., the resulting model is $
f(\bm{x}) = g(f_{\text{simple}}(\bm{x}),f_{\text{DNN}}(\bm{x}))$
with $g(\cdot,\cdot)$ being some "aggregation" function of two models, for example, a sum.
DNN networks are a class of universal function approximators \cite{hornik1989multilayer} and can capture implicitly higher-order feature interactions, while the simpler model explicitly models the lower-level interaction. It has been shown in a series of works that the combination of these two architectures leads to more accurate models \citep[e.g,][]{guo2017deepfm,wang2017deep,legendre2019deeplight,DCNV2}. Since we are interested in developing a simple model that can capture cross-order interactions between features with low inference delay, we do not employ a DNN component which would significantly increase the inference complexity. We remark that deep networks have been widely used for recommendation systems, and we refer to a comprehensive survey by \citet{zhang2019deep}.


\section{Experiments}
\label{sec:empirical_studies}

The empirical studies compare the introduced tensorFM model with state-of-the-art competitors on three benchmark datasets (\cref{sub:model-performance-ads} and~\cref{sub:model-performance-compas}). 
We also include an analysis of the inference time (\cref{sub:inference-time}). 
The code for the experiments is provided in the supplementary material.

\textbf{Datasets.} We evaluate our proposed model tensorFM on three open benchmark datasets \texttt{Avazu} \citep{avazu_dataset},  \texttt{Criteo} \cite{criteo_dataset}, and \texttt{COMPAS} \citep{washington2018argue}.  The \texttt{Avazu} and \texttt{Criteo} datasets contain online advertising click-through data used for predicting click-through rates.  The \texttt{COMPAS} dataset contains criminal justice records with demographic and offense-related features, used for predicting recidivism within two years. \Cref{table:data_statistics} summarizes key information about those datasets. 

\textbf{Baselines.} The goal of our work is to develop a Factorization Machine model that explicitly considers cross-interactions between features and exhibits low inference complexity. Consequently, we compare our model with other baselines that have relatively low inference complexity, excluding deep architectures that exhibit significantly larger inference complexity.  We use the following baselines:
\begin{compactenum}
 \item \textbf{LR}: Logistic Regression, 
    \item \textbf{FM}: Factorization Machine~\citep{rendle2010factorization},
    \item \textbf{HOFM}: Higher-Order Factorization Machine~\citep{HOFM}
    \item  \textbf{FwFM}: Field-weighted Factorization Machines~\citep{FwFM},
    \item \textbf{AFM}: Attentional Factorization Machines~\citep{AFM}, with $8$ attention heads,
    \item \textbf{CN}: Cross Network \citep{DCNV2}, with two layers and a vector to parameterize the CN as in \citep{wang2017deep}.
\end{compactenum}
Noticeably, we excluded CIN and AutoINT, as they exhibit a considerably larger inference time (see discussion in \Cref{sub:inference-time}).

We parameterize our model with two integers $r$ and $d$, and we let tensorFM$(r,d)$ be equal to the model as defined in \Cref{sec:tensor-fm} where we fix the rank $r$ for each higher-order interaction, i.e., $\bm{r} = (r,\ldots,r)$. For all baseline models, we use the implementation from the pytorch-fm package\footnote{\url{https://github.com/rixwew/pytorch-fm}}. 

\textbf{Hyperparameters.} The embedding size is fixed to $k=8$ throughout all the experiments. We perform a hyper-parameter search using a validation dataset. For the hyper-parameter tuning, we adopt the open source hyper-parameter optimizer Optuna \citep{akiba2019optuna} for $50$ steps, using AUC as optimization target. The learning rate is chosen in the interval $[1e-4,0.1]$ and the regularization coefficients in the interval $[1e-4,0]$. All the methods are trained for $5$ epochs with a batch size equal to $1024$. 

For our method, we try low-rank configurations tensorFM$(r,d)$, where $r \in \{1,2,4\}$ and $d \in \{2,3,4\}$ that exhibit low inference complexity. We refer to those models as \emph{low-complexity tensorFM}. We also report results for \emph{high-complexity tensorFM} where $r \in \{8,16,32\}$. We adopt AdaGrad as optimization algorithm, and use binary cross-entropy loss.

\subsection{Online Advertising Benchmark}
\label{sub:model-performance-ads}
In this subsection, we report the experimental results for the datasets \texttt{Criteo} and \texttt{Avazu}.
\begin{table}[ht]
\caption{Comparison of baseline performance in terms of the test set AUC (\%) and LogLoss. The best results are \textbf{bold} and the second best results are \underline{underlined}.}
\label{table:baseline_comparison}
\centering
\small
\begin{tabular}{ l c c | c c}
\toprule
 Model & \multicolumn{2}{c|}{Avazu} & \multicolumn{2}{c}{Criteo} \\
 \cmidrule(lr){2-3} \cmidrule(lr){4-5} \\
 & Test Log-Loss & Test AUC & Test Log-Loss & Test AUC \\
\midrule
 LR  & 0.3873 & 76.53 & 0.4551 & 79.39 \\
 FM  & 0.3810 & 77.69 & 0.4470 & 80.44 \\
 FwFM  & 0.3805 & 77.28 & \underline{0.4428} & \underline{80.87} \\
 AFM  & 0.3833 & 77.31 & 0.4462 & 79.75 \\
 CN  & 0.3841 & 77.16 & 0.4514 & 79.55 \\
 HOFM & \textbf{0.3797} & \textbf{77.91} & 0.4449 & 80.67 \\
 low-complexity tensorFM  & 0.3805 & 77.74 & 0.4436 & 80.79 \\
 high-complexity tensorFM & \underline{0.3804} & \underline{77.77} & \textbf{0.4422} & \textbf{80.94} \\
\bottomrule
\end{tabular}
\end{table}
In \Cref{table:baseline_comparison}, we report the test performance of the different models evaluated based on the AUC metric (Area Under the ROC curve). In the table, we report the best-performing low-complexity tensorFM and high-complexity tensorFM models (chosen using the validation set). For \texttt{Avazu}, these are respectively tensorFM$(4,4)$ and tensorFM$(4,16)$, and for \texttt{Criteo}, these are tensorFM$(3,4)$ and tensorFM$(3,32)$.

Our method obtains competitive performance with respect to the existing baselines. In particular, the high-complexity tensorFM model achieves top two AUC on both datasets. The low-complexity tensorFM also obtains competitive results. If we compare low-complexity tensorFM to FwFM, our model has fewer parameters and a smaller inference complexity. However, the performance of our model matches or outperforms the FwFM model. This suggests that taking into account higher-order interactions can improve the prediction performance. 

\begin{table}[ht]
\centering
\begin{minipage}{0.45\textwidth}
\caption{Dataset statistics.}
\label{table:data_statistics}
\centering
\small
\resizebox{\textwidth}{!}{%
\begin{tabular}{ l c c c }
\toprule
 & \texttt{Avazu} & \texttt{Criteo} & \texttt{COMPAS} \\ \midrule
 Train set size  & 28,300,276 & 33,003,326 & 4098 \\
 Valid. size & 4,042,897 & 8,250,124 & 879 \\
 Test size & 8,085,794 & 4,587,167 & 879 \\
 \# features $(m)$ & 1,544,250 & 2,086,936 & 204 \\
 \# fields $(n)$ & 22 & 39 & 14 \\
\bottomrule
\end{tabular}%
}
\end{minipage}
\hfill
\begin{minipage}{0.45\textwidth}
    \centering
    \caption{Recidivism based on COMPAS dataset.}
\label{table:compas}
\centering
\small
\begin{tabular}{ l  c | l c  }
\toprule
Baselines & AUC & TensorFM  & AUC   \\ \midrule
 LR & 83.87 & tensorFM(1,2) & 84.87 \\
 FM  & 84.03 &  tensorFM(2,2) & 84.96 \\
 FwFM  & 84.24 & tensorFM(1,3) & \textbf{85.29}  \\
 AFM  & 84.13  &   tensorFM(2,3)  & \underline{85.02} \\
 CN  & 84.33 &   tensorFM(1,4)  & {84.44} \\
\bottomrule
\end{tabular}
\end{minipage}
\end{table}

\subsection{COMPAS Dataset}
\label{sub:model-performance-compas}

The COMPAS (Correctional Offender Management Profiling for Alternative Sanctions) dataset \citep{washington2018argue} consists of data from individuals involved in the U.S. criminal justice system. Each record contains 14 fields, including demographic information (such as age, race, and sex) and details of interactions with the criminal justice system (such as criminal history, offense date, and time spent in jail). The dataset also includes a binary recidivism outcome, indicating whether an individual re-offended within two years. This dataset allows for the training of machine learning models to predict the likelihood that an individual will commit another crime within the next two years.

We normalize the numerical features to fall within the range [0, 1] and further discretize them into 5 bins. The dataset is split into training, validation, and test sets, with proportions of 70\%, 15\%, and 15\%, respectively. 
The results are presented in \Cref{table:compas}. As we can see, the low-complexity tensorFM models are able to achieve competitive (or even better) results compared to existing baselines.

\subsection{Synthetic Data}
\label{sub:synthetic-data}
In this subsection, we use synthetic data to verify that tensorFM effectively captures higher-order interactions. The dataset is generated as follows: we consider data points with three fields, each taking $20$ possible values. Each triplet of features (one from each field) is assigned a random binary label uniformly at random. The dataset consists of one million data points, sampled uniformly over all possible triplets of features.

By construction, the label of each data point is entirely determined by the triplet of its features, meaning that classification relies on third-order interactions. \Cref{table:synthetic_auc} reports the results of our experiments on a randomly generated dataset as described above. As expected, logistic regression, a linear model, performs poorly. FM and FwFM perform better, as they capture second-order interactions, but exhibit lower performance than AFM and tensorFM, which are capable of capturing higher-order interactions. 

Similarly, we test same baselines on the fourth-order interactions data. In this case, we consider four fields and every four-tuple of features (one from each field) is assigned a random binary label uniformly at random.



\begin{table}[ht]
\caption{Comparison of baseline performance in terms of the test set AUC and LogLoss ($\%$) over the synthetic data for 3-wise and 4-wise interactions. The best results are \textbf{bold} and the second best results are \underline{underlined}.}\label{table:synthetic_auc}
\centering
\small
\begin{tabular}{ l c c | c c }
\toprule
Model & \multicolumn{2}{c|}{3-wise Interactions} & \multicolumn{2}{c}{4-wise Interactions} \\
\cmidrule(lr){2-3} \cmidrule(lr){4-5}
 & Test Log-Loss & Test AUC & Test Log-Loss & Test AUC \\
\midrule
LR  & 0.6896 & 54.87 & 0.6917 & 53.18 \\
FM  & 0.6508 & 66.07 & 0.6738 & 61.11 \\
FwFM  & 0.6514 & 65.99 & 0.6734 & 61.07 \\
AFM  & \textbf{0.6218} & \textbf{71.96} & \underline{0.6730} & \underline{61.56} \\
CN  & 0.6676 & 61.73 & 0.6813 & 57.70 \\
tensorFM(3,3)  & \underline{0.6239} & \underline{70.43} & \textbf{0.6583} & \textbf{64.68} \\
\bottomrule
\end{tabular}
\end{table}

In Table~\ref{table:synthetic100_auc}, we evaluate a more challenging synthetic setup in which the original 4-wise interaction data is augmented with $96$ noise fields (for a total of $100$ fields). The results confirm that tensorFM more effectively recovers the underlying fourth-order interactions compared to the baselines, even under substantial noise and increased dimensional complexity.

\begin{table}[ht]
\caption{Comparison of baseline performance in terms of the test set AUC (\%) and LogLoss over the synthetic data containing 100 columns, for 3-wise and 4-wise interactions. The best results are \textbf{bold} and the second best results are \underline{underlined}.}\label{table:synthetic100_auc}
\centering
\small
\begin{tabular}{ l c c | c c}
\toprule
Model & \multicolumn{2}{c|}{3-wise Interactions} & \multicolumn{2}{c}{4-wise Interactions}  \\
\cmidrule(lr){2-3} \cmidrule(lr){4-5}
 & Test Log-Loss & Test AUC & Test Log-Loss & Test AUC \\
\midrule
LR  & 0.6912 & 53.59 & 0.6896 & 54.87  \\
FM  & 0.6869 & 58.01 & 0.6883 & 55.83 \\
FwFM  & 0.6572 & 64.99 & 0.6771 & 60.18 \\
AFM  & 0.6912 & 53.59 & 0.6914 & 53.36 \\
CN  & 0.6834 & 57.76 & 0.6867 & 56.04 \\
HOFM(3) & 0.6913 & 55.87 & 0.6892 & 55.20 \\
tensorFM(3,3) & \textbf{0.6325} & \textbf{68.91} & \underline{0.6618} & \underline{63.85} \\
tensorFM(4,4) & \underline{0.6331} & \underline{68.53} & \textbf{0.6526} & \textbf{65.55} \\
\bottomrule
\end{tabular}
\end{table}

\subsection{Ablation Study}
In this section, we examine how the rank $r$ and the interaction order $d$ affect the performance of our method, tensorFM. The experiments are conducted on the \texttt{Avazu} dataset. Figure~\ref{fig:rank} shows the test AUC as a function of the rank $r$. We observe that increasing the rank leads to improved model accuracy. Additionally, higher-order interactions also contribute to better performance, indicating that both rank and interaction order play important roles in enhancing the model's accuracy. Figure~\ref{fig:rank2} also implies the binary cross entropy loss decreases as we increase rank and the order of interaction. 
\begin{figure}

\begin{minipage}[t]{0.48\textwidth}
    \centering    \includegraphics[width=\textwidth]{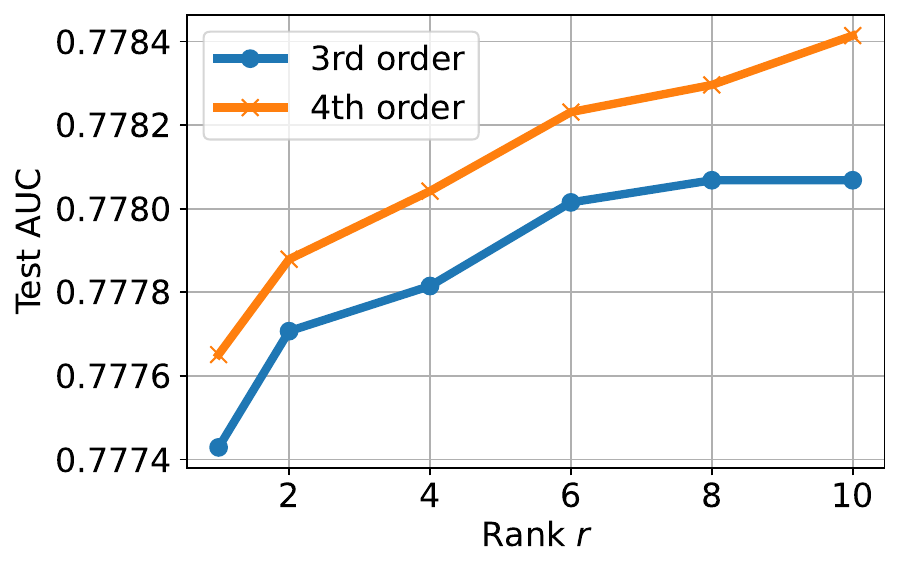}  
    \caption{Test AUC as a function of rank $r$. Increasing rank and order of interaction improves the performance.}
    \label{fig:rank}
\end{minipage}
\hfill
\begin{minipage}[t]{0.48\textwidth}
    \centering
    \includegraphics[width=\textwidth]{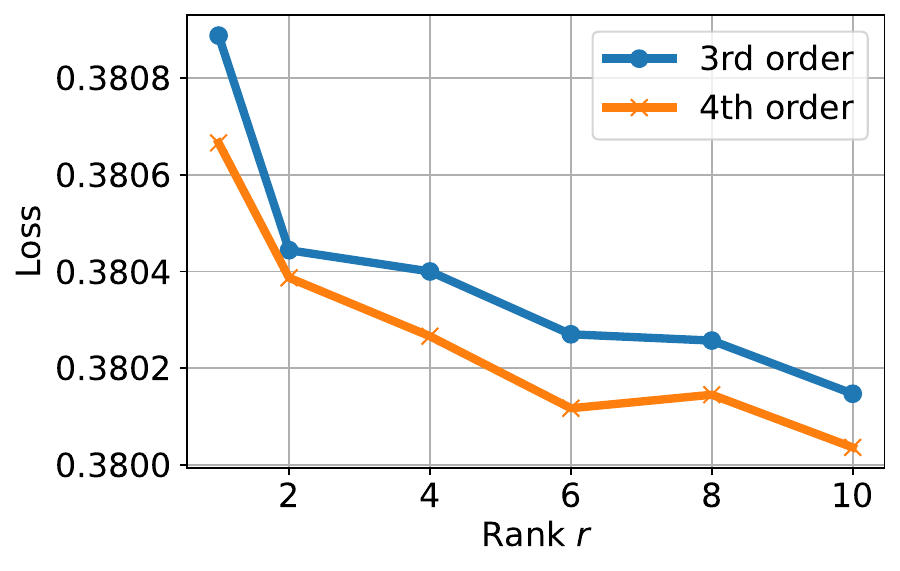}  
    \caption{Loss as a function of rank $r$. Increasing rank and order of interaction decreases the loss.}
    \label{fig:rank2}
\end{minipage}

\end{figure}

\subsection{Inference Time}
\label{sub:inference-time}

In this subsection, we measure the inference complexity using an estimate of the number of Floating Point Operations (FLOPs) necessary to compute the label of a given vector $\bm{x} \in \mathcal{D}_{m,n}$.

This comparison follows the example of previous work \citep{sun2021fm2}, and allows us to compare the inference complexity regardless of the implementation details. We compare the number of FLOPs required for inference for both our model and the baselines as a function of the number of fields $n$ (i.e., the number of non-zero entries of $\bm{x}$). The results are reported in \Cref{fig:my_label}. We observe that the number of FLOPs scales linearly with the number of fields $n$ for our model tensorFM, as well as for LR, FM, and CN.

In contrast, AFM and FwFM demonstrate a quadratic increase in FLOPs counts with respect to $n$, leading to a substantially higher number of FLOPs for larger inputs.  The FLOPs counts for CIN and AutoINT, also proportional to $\Omega(n^2)$, were omitted from the plot due to their significantly larger values (see \Cref{table:runtime-combined} for a comprehensive details of the inference complexity for different models). The fastest models, LR and FM, demonstrate lower accuracy as detailed in \Cref{table:baseline_comparison}. In contrast, our method not only achieves competitive accuracy relative to the other baselines but also exhibits 
a low inference time that scales linearly with the number of fields~$n$.

\begin{figure}
\begin{minipage}{0.48\textwidth}
    \centering
\includegraphics[width=\textwidth]{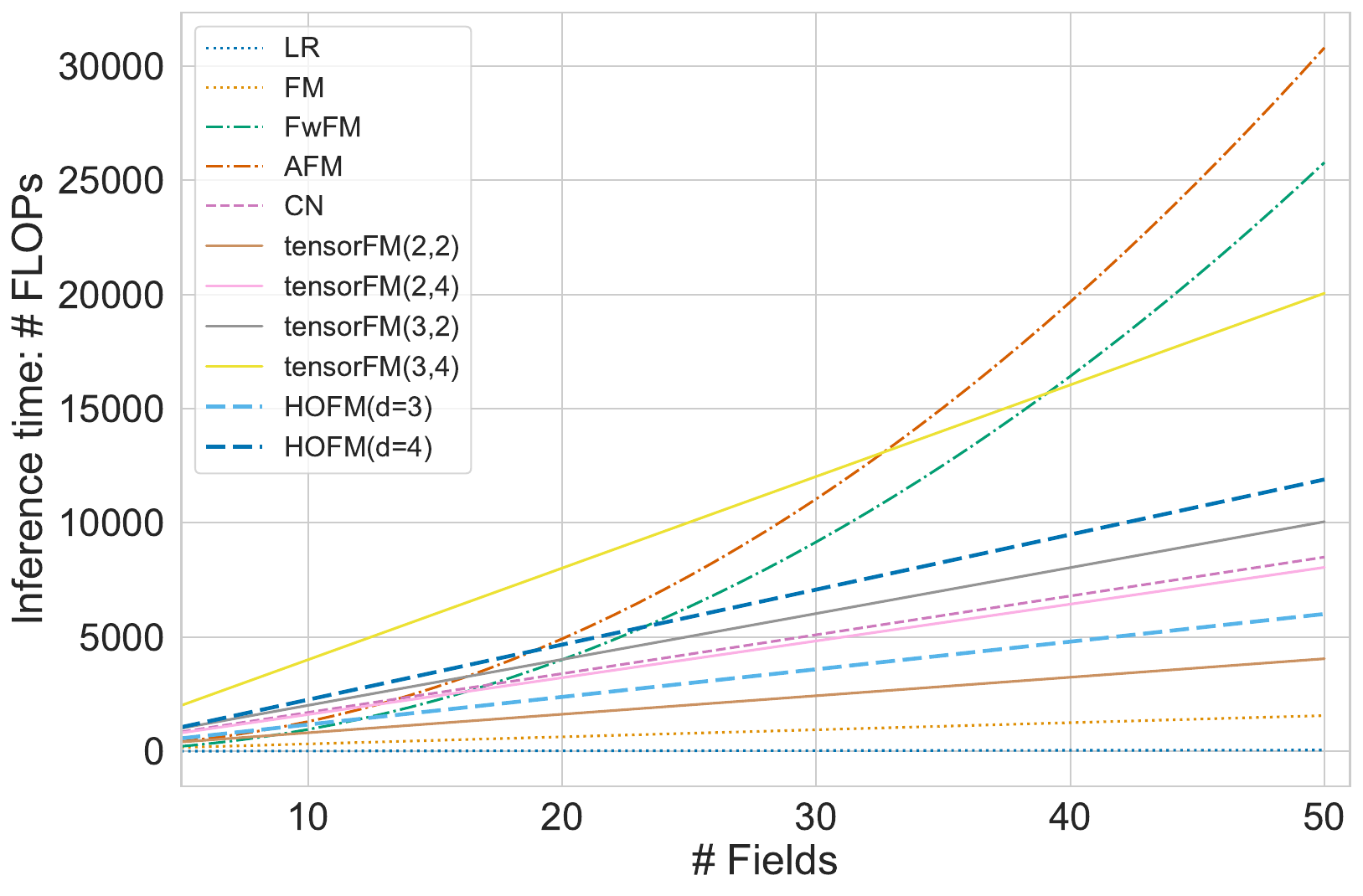}  
    \caption{Inference time of different models measured in FLOPs varying the number of input fields.}
   \label{fig:my_label}
\end{minipage}
\hfill
\begin{minipage}{0.48\textwidth}
    \centering
\includegraphics[width=\textwidth]{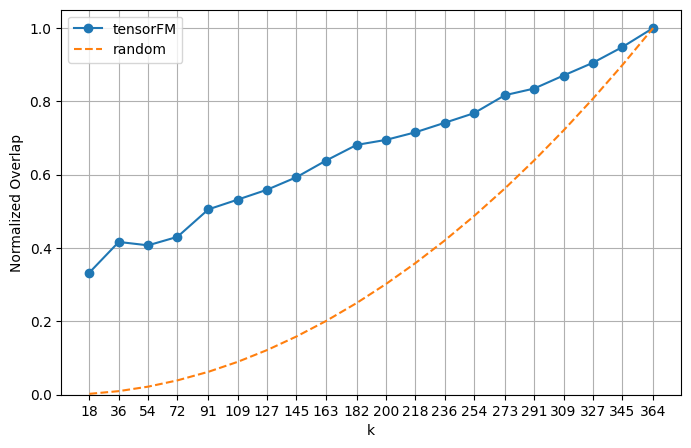}  
    \caption{Overlap of top-$k$ interactions: 
tensorFM(3,3) vs mutual information. The dashed line represents the expected overlap with a random order. }
    \label{fig:topk}
\end{minipage}
\end{figure}

We conduct an additional set of experiments to evaluate inference time at serving using an optimized Cython implementation. Specifically, we compare the runtime of FwFM, CN, AFM, HOFM, and TensorFM. Similarly, we exclude LR and FM, as they are linear model with substantially lower inference times. Our evaluation is performed on input points with 
$n=100$ fields and an embedding size $k=8$. We report the average inference time over $100000$ points in Table~\ref{table:runtime-combined}.
\begin{table}[h!]
\centering
\caption{Inference complexity (right) and empirical inference time in milliseconds (left) of different models. The empirical inference complexity is evaluated as the average inference time on 100000 synthetic data points with $n=100$ fields and embedding size $k=8$.}
\label{table:runtime-combined}
\begin{minipage}[t]{0.48\linewidth}
\centering
\begin{tabular}{@{}lr@{}}
\toprule
Model & Time (ms) \\
\midrule
TensorFM(1,2)      & 0.00115 \\
TensorFM(2,2)      & 0.00152 \\
TensorFM(4,2)      & 0.00286 \\
TensorFM(1,3)      & 0.00194 \\
TensorFM(2,3)      & 0.00367 \\
TensorFM(4,3)      & 0.00703 \\
TensorFM(1,4)      & 0.00340 \\
TensorFM(2,4)      & 0.00653 \\
TensorFM(4,4)      & 0.01292 \\
FwFM               & 0.01265 \\
CN                 & 0.00456 \\
AFM                & 0.18366 \\
HOFM               & 0.00488 \\
\bottomrule
\end{tabular}
\end{minipage}%
\hfill
\begin{minipage}[t]{0.48\linewidth}
\centering
\begin{tabular}{@{}cc@{}}
\toprule
Model & Inference Complexity \\
\midrule
LR            & $O(n)$ \\
FM            & $O(nk)$ \\
FwFM          & $O(n^2k)$ \\
AFM           & $O(n^2k)$ \\
CN(layer)     & $O(nk \cdot \texttt{\#layers})$ \\
AutoINT       & $O(n^2)$ \\
CIN           & $\Omega(n^3)$ \\
HOFM(d) & 
$O(nkd)$ \\
TensorFM(d,r) & $O(nkrd^2)$ \\
\bottomrule
\end{tabular}
\end{minipage}
\end{table}

\subsection{Interpretability}
Our model enables interpretability by explicitly parameterizing $\ell$-way feature interactions. In particular, the interaction tensor $\tS^{[\ell]}$ can be directly computed from the learned parameters using the decomposition in Equation~\ref{eq:cp-decomposition}. We remind that this tensor encodes the strength of $\ell$-way interactions among fields. To evaluate how well these model-induced interactions reflect true statistical dependencies, we compare the learned interaction values with mutual information computed on the training set.

For any tuple of $\ell$ features $(i_1, \dots, i_\ell)$ with feature $i_r$ belonging to field $F_{k_r}$ for $1 \leq r \leq \ell$, we let
\begin{align*}
I(i_1, \dots, i_\ell) = \tS^{[\ell]}_{k_1, \dots, k_\ell} \cdot \langle a_{i_1}, \dots, a_{i_\ell} \rangle,
\end{align*}
where $a_i$ is the embedding vector associated with feature $i$. We define the learned interaction strength between fields $(F_{k_1}, \dots, F_{k_\ell})$ as
\[
\frac{\sum_{(i_1, \dots, i_\ell) \in (F_{k_1}, \dots, F_{k_\ell})} |I(i_1, \dots, i_\ell)| \cdot \#(i_1, \dots, i_\ell)}{\sum_{(i_1, \dots, i_\ell) \in (F_{k_1}, \dots, F_{k_\ell})} \#(i_1, \dots, i_\ell)},
\]
where $\#(i_1, \dots, i_\ell)$ denotes the number of times the feature tuple appears in the training set. We compare this with the mutual information between $\ell$-tuples of fields and the label, estimated from the empirical distribution $p$ on the training set:
\[
MI((F_{k_1}, \dots, F_{k_\ell}); Y) = \sum_{(i_1, \dots, i_\ell) \in (F_{k_1},\ldots,F_{k_r}) } \sum_{y \in Y} p((i_1, \dots, i_\ell), y) \log \left( \frac{p((i_1, \dots, i_\ell), y)}{p(i_1, \dots, i_\ell)p(y)} \right) \enspace .
\]
where the sum is over tuples with $i_r \in F_{k_r}$.

We perform this analysis on the COMPAS dataset using a \texttt{tensorFM(3,3)} model. We observe a Pearson correlation of $0.3995$ between the learned interaction strengths and the mutual information values, computed across all possible $\ell$-tuples of fields. In addition to correlation, we measure the overlap between the top-$k$ most strongly interacting field combinations under both metrics. As shown in Figure~\ref{fig:topk}, the blue line represents the overlap between the top-$k$ triplets ranked by learned interaction strength and those ranked by mutual information. The observed intersection rate is consistently higher than the expected overlap  $(k/n)^2$
 under a random ordering (dashed orange line), indicating a strong alignment between model-induced and data-driven interactions. In particular, for the top-36 triplets, the overlap exceeds 40\%.
 
A key advantage of tensorFM is its ability to make higher-order feature interactions directly interpretable. In particular, it is possible to explicitly visualize the significant learned field interactions. Figure~\ref{fig:top36_heatmap} shows a heatmap of the 36 most significant learned third-order interaction strengths.
\begin{figure}[h!]
    \centering
    \includegraphics[width=0.75\textwidth]{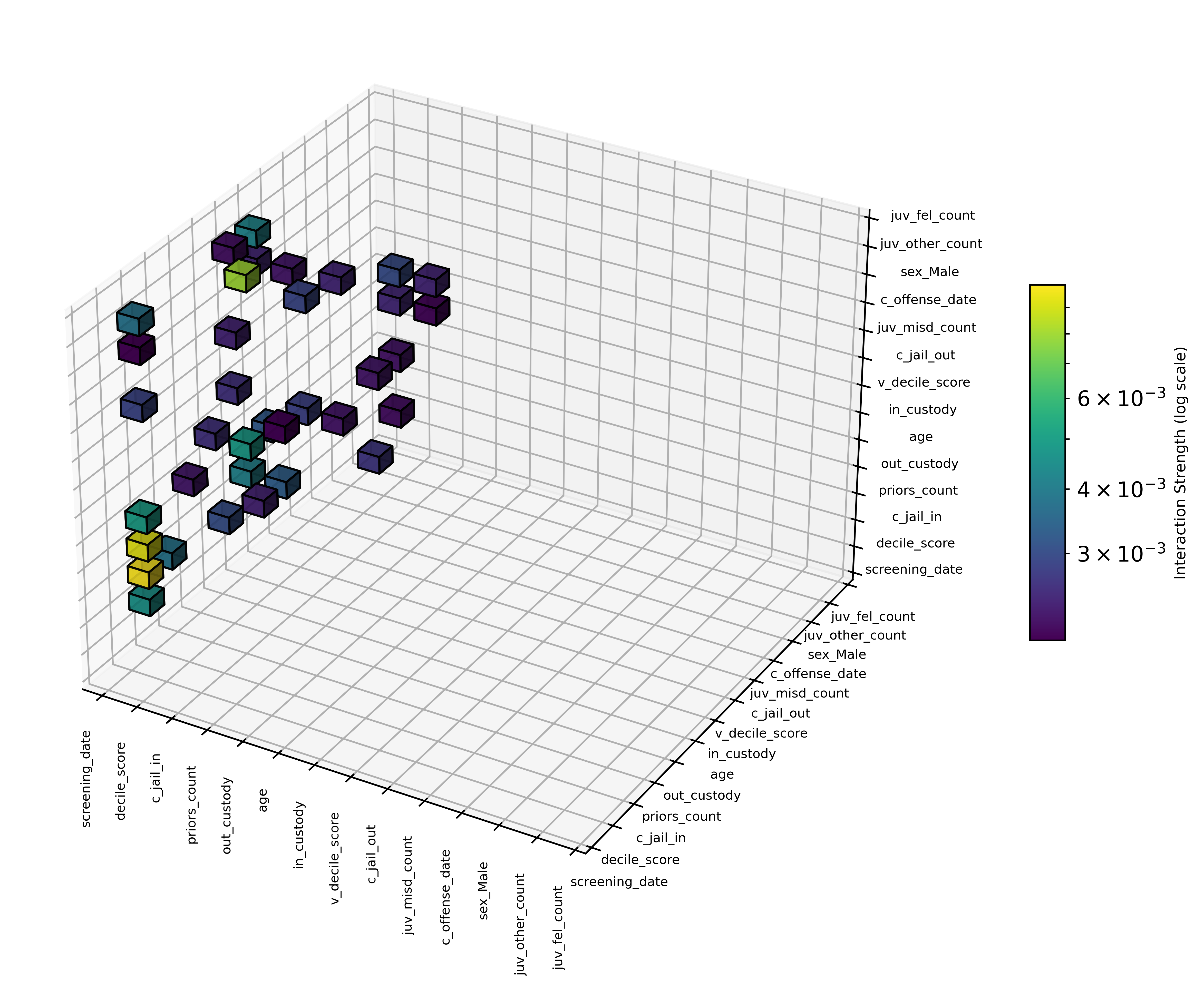}
    \caption{
        Heatmap of the 36 strongest learned third-order interaction strengths 
        aggregated across field triplets. 
    }
    \label{fig:top36_heatmap}
\end{figure}

\section{Conclusions}
\label{sec:conclusion}
We introduce tensorFM, a novel variant of factorization machines designed to model higher-order cross-interactions between feature embeddings. Our model learns a weighting of these cross-order interactions that is low-rank, enabling efficient computation at inference time. Thereby, tensorFM also provides interpretability by its representation of the cross-interaction strength between different fields.
Finally, we empirically demonstrate that our model presents a favorable trade-off between accuracy and inference time compared to existing approaches in the field.

An interesting direction for future work is to integrate our model within a two-stream architecture, combining it with deep neural components as in DCNv2 \citep{DCNV2} or xDeepFM \citep{lian2018xdeepfm}. While our focus has been on achieving a favorable trade-off between accuracy and inference time, augmenting our model with deep learning modules may further improve performance, and we leave this extension to future research.

\textbf{Broader Impact.}  This paper uses the COMPAS dataset for empirical evaluation. Because this dataset encodes sensitive information and reflects structural biases in the U.S. criminal-justice system, any predictive model trained on it risks reinforcing or amplifying these disparities. Our contribution is methodological, and we do not propose deployment in real decision-making settings; moreover, our experiments do not optimize or evaluate fairness metrics. We encourage readers to treat our use of COMPAS as a benchmark study, and to ensure that any downstream use of related models involves domain-expert review, fairness auditing, and alignment with appropriate legal and ethical standards.



\bibliography{bibliography}

@inproceedings{rendle2010factorization,
  title={Factorization machines},
  author={Rendle, Steffen},
  booktitle={2010 IEEE International conference on data mining},
  pages={995--1000},
  year={2010},
  organization={IEEE}
}

@article{FM,
author = {Rendle, Steffen},
title = {Factorization Machines with LibFM},
year = {2012},
issue_date = {May 2012},
publisher = {Association for Computing Machinery},
address = {New York, NY, USA},
volume = {3},
number = {3},
issn = {2157-6904},
url = {https://doi.org/10.1145/2168752.2168771},
doi = {10.1145/2168752.2168771},
journal = {ACM Trans. Intell. Syst. Technol.},
month = {may},
articleno = {57},
numpages = {22}
}

@inproceedings{FFM,
  title={Field-aware factorization machines in a real-world online advertising system},
  author={Juan, Yuchin and Lefortier, Damien and Chapelle, Olivier},
  booktitle={Proceedings of the 26th International Conference on World Wide Web Companion},
  pages={680--688},
  year={2017}
}

@incollection{wang2017deep,
  title={Deep \& cross network for ad click predictions},
  author={Wang, Ruoxi and Fu, Bin and Fu, Gang and Wang, Mingliang},
  booktitle={Proceedings of the ADKDD'17},
  pages={1--7},
  year={2017}
}

@article{coleman2024unified,
  title={Unified Embedding: Battle-tested feature representations for web-scale ML systems},
  author={Coleman, Benjamin and Kang, Wang-Cheng and Fahrbach, Matthew and Wang, Ruoxi and Hong, Lichan and Chi, Ed and Cheng, Derek},
  journal={Advances in Neural Information Processing Systems},
  volume={36},
  year={2024}
}

@inproceedings{shtoff2024low,
    author = {Shtoff, Alex and Viderman, Michael and Haramaty-Krasne, Naama and Somekh, Oren and Raviv, Ariel and Ban, Tularam},
    title = {Low Rank Field-Weighted Factorization Machines for Low Latency Item Recommendation},
    year = {2024},
    isbn = {9798400705052},
    publisher = {Association for Computing Machinery},
    address = {New York, NY, USA},
    url = {https://doi.org/10.1145/3640457.3688097},
    doi = {10.1145/3640457.3688097},
    booktitle = {Proceedings of the 18th ACM Conference on Recommender Systems},
    pages = {238–246},
    numpages = {9},
    location = {Bari, Italy},
    series = {RecSys '24}
}

@inproceedings{akiba2019optuna,
  title={Optuna: A next-generation hyperparameter optimization framework},
  author={Akiba, Takuya and Sano, Shotaro and Yanase, Toshihiko and Ohta, Takeru and Koyama, Masanori},
  booktitle={Proceedings of the 25th ACM SIGKDD international conference on knowledge discovery \& data mining},
  pages={2623--2631},
  year={2019}
}

@inproceedings{FwFM,
  title={Field-weighted factorization machines for click-through rate prediction in display advertising},
  author={Pan, Junwei and Xu, Jian and Ruiz, Alfonso Lobos and Zhao, Wenliang and Pan, Shengjun and Sun, Yu and Lu, Quan},
  booktitle={Proceedings of the 2018 World Wide Web Conference},
  pages={1349--1357},
  year={2018}
}

@article{HOFM,
  title={Higher-order factorization machines},
  author={Blondel, Mathieu and Fujino, Akinori and Ueda, Naonori and Ishihata, Masakazu},
  journal={Advances in Neural Information Processing Systems},
  volume={29},
  year={2016}
}

@article{AFM,
  title={Attentional factorization machines: Learning the weight of feature interactions via attention networks},
  author={Xiao, Jun and Ye, Hao and He, Xiangnan and Zhang, Hanwang and Wu, Fei and Chua, Tat-Seng},
  journal={arXiv preprint arXiv:1708.04617},
  year={2017}
}

@inproceedings{DCNV2,
  title={Dcn v2: Improved deep \& cross network and practical lessons for web-scale learning to rank systems},
  author={Wang, Ruoxi and Shivanna, Rakesh and Cheng, Derek and Jain, Sagar and Lin, Dong and Hong, Lichan and Chi, Ed},
  booktitle={Proceedings of the web conference 2021},
  pages={1785--1797},
  year={2021}
}

@inproceedings{lian2018xdeepfm,
  title={xdeepfm: Combining explicit and implicit feature interactions for recommender systems},
  author={Lian, Jianxun and Zhou, Xiaohuan and Zhang, Fuzheng and Chen, Zhongxia and Xie, Xing and Sun, Guangzhong},
  booktitle={Proceedings of the 24th ACM SIGKDD international conference on knowledge discovery \& data mining},
  pages={1754--1763},
  year={2018}
}

@article{guo2017deepfm,
  title={DeepFM: a factorization-machine based neural network for CTR prediction},
  author={Guo, Huifeng and Tang, Ruiming and Ye, Yunming and Li, Zhenguo and He, Xiuqiang},
  journal={arXiv preprint arXiv:1703.04247},
  year={2017}
}

@inproceedings{cheng2016wide,
  title={Wide \& deep learning for recommender systems},
  author={Cheng, Heng-Tze and Koc, Levent and Harmsen, Jeremiah and Shaked, Tal and Chandra, Tushar and Aradhye, Hrishi and Anderson, Glen and Corrado, Greg and Chai, Wei and Ispir, Mustafa and others},
  booktitle={Proceedings of the 1st workshop on deep learning for recommender systems},
  pages={7--10},
  year={2016}
}

@article{mao2023finalmlp,
  title={FinalMLP: An Enhanced Two-Stream MLP Model for CTR Prediction},
  author={Mao, Kelong and Zhu, Jieming and Su, Liangcai and Cai, Guohao and Li, Yuru and Dong, Zhenhua},
  journal={arXiv preprint arXiv:2304.00902},
  year={2023}
}

@article{cramer2002origins,
  title={The origins of logistic regression},
  author={Cramer, Jan Salomon},
  year={2002},
  publisher={Tinbergen Institute Working Paper}
}

@inproceedings{blondel2016polynomial,
  title={Polynomial networks and factorization machines: New insights and efficient training algorithms},
  author={Blondel, Mathieu and Ishihata, Masakazu and Fujino, Akinori and Ueda, Naonori},
  booktitle={International Conference on Machine Learning},
  pages={850--858},
  year={2016},
  organization={PMLR}
}

@inproceedings{sun2021fm2,
  title={FM2: Field-matrixed factorization machines for recommender systems},
  author={Sun, Yang and Pan, Junwei and Zhang, Alex and Flores, Aaron},
  booktitle={Proceedings of the Web Conference 2021},
  pages={2828--2837},
  year={2021}
}

@article{kolda2009tensor,
  title={Tensor decompositions and applications},
  author={Kolda, Tamara G and Bader, Brett W},
  journal={SIAM review},
  volume={51},
  number={3},
  pages={455--500},
  year={2009},
  publisher={SIAM}
}

@article{hitchcock1927expression,
  title={The expression of a tensor or a polyadic as a sum of products},
  author={Hitchcock, Frank L},
  journal={Journal of Mathematics and Physics},
  volume={6},
  number={1-4},
  pages={164--189},
  year={1927},
  publisher={Wiley Online Library}
}

@article{hornik1989multilayer,
  title={Multilayer feedforward networks are universal approximators},
  author={Hornik, Kurt and Stinchcombe, Maxwell and White, Halbert},
  journal={Neural networks},
  volume={2},
  number={5},
  pages={359--366},
  year={1989},
  publisher={Elsevier}
}

@inproceedings{legendre2019deeplight,
  title={Deeplight: Learning illumination for unconstrained mobile mixed reality},
  author={LeGendre, Chloe and Ma, Wan-Chun and Fyffe, Graham and Flynn, John and Charbonnel, Laurent and Busch, Jay and Debevec, Paul},
  booktitle={Proceedings of the IEEE/CVF Conference on Computer Vision and Pattern Recognition},
  pages={5918--5928},
  year={2019}
}

@article{zhang2019deep,
  title={Deep learning based recommender system: A survey and new perspectives},
  author={Zhang, Shuai and Yao, Lina and Sun, Aixin and Tay, Yi},
  journal={ACM computing surveys (CSUR)},
  volume={52},
  number={1},
  pages={1--38},
  year={2019},
  publisher={ACM New York, NY, USA}
}

@inproceedings{almagor2022you,
  title={You Say Factorization Machine, I Say Neural Network-It’s All in the Activation},
  author={Almagor, Chen and Hoshen, Yedid},
  booktitle={Proceedings of the 16th ACM Conference on Recommender Systems},
  pages={389--398},
  year={2022}
}

@misc{avazu_dataset,
  author = "Avazu",
  title = "Avazu Click-Through Rate Prediction Dataset",
  howpublished = {\url{https://www.kaggle.com/c/avazu-ctr-prediction/data}},
  year = "2014",
}

@misc{criteo_dataset,
  author = "Criteo Labs",
  title = "Criteo Labs Research: Large Scale Dataset for Click-Through Rate Prediction",
  howpublished = {\url{https://www.kaggle.com/c/criteo-display-ad-challenge}},
  year = "2014",
}

@article{washington2018argue,
  title={How to argue with an algorithm: Lessons from the COMPAS-ProPublica debate},
  author={Washington, Anne L},
  journal={Colo. Tech. LJ},
  volume={17},
  pages={131},
  year={2018},
  publisher={HeinOnline}
}

@article{gorishniy2022embeddings,
  title={On embeddings for numerical features in tabular deep learning},
  author={Gorishniy, Yury and Rubachev, Ivan and Babenko, Artem},
  journal={Advances in Neural Information Processing Systems},
  volume={35},
  pages={24991--25004},
  year={2022}
}

@article{Shtoff2024FunctionBE,
  title={Function Basis Encoding of Numerical Features in Factorization Machines},
  author={Alex Shtoff and Elie Abboud and Rotem Stram and Oren Somekh},
  journal={Trans. Mach. Learn. Res.},
  year={2024},
  volume={2024}
}

@inproceedings{rugamer2024scalable,
  title={Scalable higher-order tensor product spline models},
  author={R{\"u}gamer, David},
  booktitle={International Conference on Artificial Intelligence and Statistics},
  pages={1--9},
  year={2024},
  organization={PMLR}
}

\newpage
\appendix
\onecolumn

\section{Higher-Order Singular Value Decomposition}
\label{app:hosvd}
In this section, we discuss the higher-order singular value decomposition of a tensor, and how it can be exploited to obtain a fast computation of the higher-order interactions between fields. In particular, we will prove a similar statement to the one obtained for the CP decomposition (\Cref{prop:fast-computation}).

Given an $\ell$ order tensor $\tW \in \mathbb{R}^{n \times \ldots \times n}$, we say that $\tW$ has \emph{multilinear rank} $(r_1,\ldots,r_{\ell})$ if there exists a collection of $\ell$ matrices $\mU^{[1]},\ldots,\mU^{[\ell]}$, with $\mU^{[i]} \in \mathbb{R}^{n \times r_{i}}$ for $1 \leq i \leq \ell$, and a \emph{core tensor} $\tS= \mathbb{R}^{r_1 \times \ldots \times r_{\ell}}$ such that:

\begin{align}
\tW = \sum_{i_1=1}^{r_1} \ldots \sum_{i_\ell=1}^{r_\ell} \tS_{i_1,\ldots,i_\ell} \cdot( \bm{u}^{[1]}_{i_1} \otimes  \ldots \otimes \bm{u}^{[\ell]}_{i_\ell})
\end{align}

\begin{lemma}
\label{prop:fast-computation-2}
Let $\tW \in \mathbb{R}^{n \times \ldots \times n}$ be a $\ell$-order tensor. If $\tW$ has multilinear rank $(r_1,\ldots,r_{\ell})$, then it is possible to evaluate 
\begin{align*}
T=\sum_{i_1, \ldots, i_\ell} \tW^{[\ell]}_{i_1,\ldots,i_\ell} \cdot \langle \bm{a}_{\bm{x},i_1}, \ldots, \bm{a}_{\bm{x},i_\ell} \rangle_F
\end{align*}
with time complexity $O( (\prod_{i=1}^{\ell} r_i) \cdot k + \ell \cdot n k \max\{r_1,\ldots,r_\ell\})$.
\end{lemma}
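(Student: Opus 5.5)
We follow the same route as the proof of \Cref{prop:fast-computation}, replacing the CP factors with the Tucker (HOSVD) factors. First we rewrite $T$ as a Frobenius inner product against a sum of symmetric rank-one tensors built from the rows of $\mA_{\bm{x}}$:
\begin{align*}
T = \left\langle \sum_{j=1}^{k} \bm{\overline{a}}_{\bm{x},j} \otimes \cdots \otimes \bm{\overline{a}}_{\bm{x},j}, \tW \right\rangle_F .
\end{align*}
Next we substitute the multilinear decomposition $\tW = \sum_{i_1,\ldots,i_\ell} \tS_{i_1,\ldots,i_\ell}\, \bm{u}^{[1]}_{i_1}\otimes\cdots\otimes \bm{u}^{[\ell]}_{i_\ell}$. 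By bilinearity, and because the inner product of two rank-one tensors factorizes into a product of vector inner products,
\begin{align*}
T = \sum_{j=1}^{k} \sum_{i_1=1}^{r_1}\cdots\sum_{i_\ell=1}^{r_\ell} \tS_{i_1,\ldots,i_\ell} \prod_{b=1}^{\ell} \langle \bm{u}^{[b]}_{i_b}, \bm{\overline{a}}_{\bm{x},j}\rangle .
\end{align*}

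The computation then proceeds in two stages. In the first stage we form the projected matrices $\mM^{[b]} = \mU^{[b]T}\mA_{\bm{x}}^T \in \mathbb{R}^{r_b\times k}$ for $b=1,\ldots,\ell$. Their entries are $\mM^{[b]}_{i,j} = \langle \bm{u}^{[b]}_i, \bm{\overline{a}}_{\bm{x},j}\rangle$. Each $\mM^{[b]}$ costs $O(r_b n k)$, so this stage costs $O(\ell n k \max_b r_b)$ in total, which matches the second term of the bound.

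In the second stage we evaluate, for each $j$, the contraction $\sum_{i_1,\ldots,i_\ell}\tS_{i_1,\ldots,i_\ell}\prod_b \mM^{[b]}_{i_b,j}$ and then sum over $j$. This contraction of the core tensor with $\ell$ vectors is the step we expect to be the main obstacle. Computed naively, each product inside the sum costs $\ell$, which gives $O(\ell k\prod_b r_b)$ rather than the claimed $O(k\prod_b r_b)$.

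To remove the factor $\ell$, we contract one mode at a time. For fixed $j$, we first contract the last mode of $\tS$ with the vector $\mM^{[\ell]}_{\cdot,j}$, which costs $\prod_{b=1}^{\ell} r_b$ and yields an order-$(\ell-1)$ tensor. We then contract that with $\mM^{[\ell-1]}_{\cdot,j}$ at cost $\prod_{b=1}^{\ell-1} r_b$, and continue until a scalar remains. The per-$j$ cost is therefore
\begin{align*}
\sum_{t=1}^{\ell}\prod_{b=1}^{t} r_b \le \prod_{b=1}^{\ell} r_b \cdot \sum_{s\ge 0}2^{-s} = O\Big(\prod_{b=1}^{\ell} r_b\Big)
\end{align*}
whenever every $r_b\ge 2$, because the terms then shrink geometrically. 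Modes with $r_b=1$ must be handled separately. They contribute only a scalar multiplication each, so we contract them first; this adds $O(\ell)$ per $j$, which is absorbed by the $O(\ell n k)$ term since $n\ge1$.

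Summing over the $k$ values of $j$ gives $O(k\prod_b r_b)$ for the second stage. Adding the first stage yields the stated complexity $O\big((\prod_{i=1}^{\ell} r_i)\, k + \ell\, n k \max\{r_1,\ldots,r_\ell\}\big)$.
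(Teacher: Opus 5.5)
Your proposal is correct and follows the paper's route: you rewrite $T$ as an inner product with $\sum_{j=1}^k \overline{\bm{a}}_{\bm{x},j}\otimes\cdots\otimes\overline{\bm{a}}_{\bm{x},j}$, substitute the Tucker decomposition, and evaluate the resulting core-tensor sum, exactly as the paper does. The paper only asserts the $O\big((\prod_i r_i)k + \ell nk\max_i r_i\big)$ cost from that formula, although evaluating it literally costs $O(\ell k\prod_i r_i)$; your mode-by-mode contraction, with size-one modes factored out as scalars, supplies the missing justification for dropping that factor $\ell$.
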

\begin{proof}
By proceeding as in the proof of  \Cref{prop:fast-computation}, it is possible to show that 
\begin{align*}
T &=  \left\langle \sum_{i=1}^{k} \underbrace{\bm{\overline{a}}_{\bm{x},i} \otimes \ldots \otimes \bm{\overline{a}}_{\bm{x},i}}_{\ell \text{ times} }, \sum_{i_1=1}^{r_1} \ldots \sum_{i_\ell=1}^{r_\ell} \tS_{i_1,\ldots,i_\ell} \cdot( \bm{u}^{[1]}_{i_1} \otimes  \ldots \otimes \bm{u}^{[\ell]}_{i_\ell})\right\rangle_F  \\
&= \sum_{i_1=1}^{r_1} \ldots \sum_{i_\ell=1}^{r_\ell} S_{i_1,\ldots,i_\ell} \sum_{h=1}^k \prod_{j=1}^{\ell} \left( \bm{u}^{[j]}_{i_j} \cdot \bm{\overline{a}_h}\right)
\end{align*}
The computational time to evaluate the above expression is $O( (\prod_{i=1}^{\ell} r_i) \cdot k + \ell \cdot n k \max\{r_1,\ldots,r_\ell\})$.
\end{proof}

\section{Additional Details on the Experiments}
In the supplementary material, we provide a \texttt{readme} that contains information on how to run our code and reproduce our results. We use a cluster with 48 CPU and 192GB of RAM. It takes one week to run all the experiments with this amount of computational power.



\end{document}